\numberwithin{figure}{section}
\numberwithin{equation}{section}
\begin{document}
\title{Scaling the weight parameters in Markov logic networks and relational
logistic regression models}
\author{Felix Q. Weitk\"amper\\
ORCID 0000-0002-3895-8279}
\institute{
Institut f\"ur Informatik\\
Ludwig-Maximilians-Universit\"at M\"unchen\\
Bundesrepublik Deutschland\\
felix.weitkaemper@lmu.de\\}
\maketitle

\begin{abstract}
Extrapolation with domain size has received plenty of attention recently, 
both in its own right and as part of the broader issue of scaling inference and learning to large domains.
We consider Markov logic networks and relational logistic regression
as two fundamental representation formalisms in statistical relational
artificial intelligence that use weighted formulas in their specification.
However, Markov logic networks are based on undirected graphs, while
relational logistic regression is based on directed acyclic graphs.
We show that when scaling the weight parameters with the domain size,
the asymptotic behaviour of a relational logistic regression model
is transparently controlled by the parameters, and we supply an algorithm
to compute asymptotic probabilities. We show using two examples
that this is not true for Markov logic networks. We also discuss using
several examples, mainly from the literature, how the application
context can help the user to decide when such scaling is appropriate
and when using the raw unscaled parameters might be preferable. 
\end{abstract}

\subsubsection*{Keywords }

Markov logic networks, Relational logistic regression, Scaling by
domain size, Bayesian networks

\newpage

\section{Introduction}

In the last 20 years, Statistical Relational Artificial Intelligence
(StarAI) has developed into a promising approach for combining the
reasoning skills of classic symbolic AI with the adaptivity of modern
statistical AI. It is not immediately clear, however, how StarAI behaves
when transitioning between domains of different sizes. This is a particularly
pertinent issue for StarAI since the modular design of statistical
relational formalisms, that are presented as a general template alongside
a given concrete domain, is one of their main attractions. Furthermore,
scalability and the prohibitive costs of learning and inference on
large domains have proven a key barrier to the widespread deployment
in applications, and therefore a transfer of learning results from
a smaller set of training data to a larger set of test data is especially
attractive. 

The topic of extrapolating across domain sizes has therefore received
some attention from the literature, and general patterns of behaviour
have become clear. On the one hand, Jaeger and Schulte (2018, 2020)
have provided very limiting necessary conditions under which domain
size does not affect inference in different StarAI approaches. On
the other hand, Poole et al. (2014) have characterised the extrapolation
behaviour of both Markov Logic Networks (MLNs) and Relational Logistic
Regression models (RLRs) on a small class of formulas on which the inferences
turn out to be asymptotically independent of the learned or supplied
parameters. This characterisation was extended and partly corrected
by Mittal et al. (2019) using considerable analytic and numerical
effort. They also present a proposal to mitigate the domain-size dependence
in MLNs by scaling the weights associated with formulas according to
the size of the domain, calling the resulting formalism Domain-size
Aware Markov Logic Networks (DA-MLNs). With similar computational effort,
they prove that asymptotic probabilities in DA-MLNs are dependent on
the supplied parameters for some example cases. However, a general
and systematic investigation of this dependence is still lacking. 

\subsection{Aims of the Paper}

This paper has three main objectives: First, we introduce a representation
of the probability of an atom in a grounding of an MLN as the integral
of a function $\mathrm{sigmoid}(\delta_{R(\vec{x})}^{T,n}(\mathbb{\mathfrak{X}}))$
directly derived from the weight parameters of the model. We then
use this tool to evaluate the asymptotic probabilities (with increasing
domain size) of two examples, and we see that neither MLNs nor the
scaled DA-MLNs prove adequate to ensure an asymptotic behaviour that
actually depends on the parameters of the model: $R(x)$ in the (DA-)MLN
given by $P\Rightarrow R(x):w$, and $Q(x)$ in the (DA-)MLN given
by $P\wedge Q(x)\wedge R(x,y)$. 

Second, we adapt domain-size dependent scaling of weights to a directed
analogue of MLNs, the RLRs introduced by Kazemi et al. (2014a, 2014b),
to obtain a formalism that we call Domain-size Aware Relational
Logistic Regression (DA-RLR) in analogy to DA-MLNs. We show that
the weight-independent asymptotic behaviour that was exemplified above
for MLNs and DA-MLNs does not occur in DA-RLRs; in fact, we provide an
algorithm to determine the asymptotic probabilities. To the best of
our knowledge, this is the first StarAI formalism for which such a
complete characterisation of the asymptotic probabilities from the
supplied parameters is known. 

Finally, the natural interpretation of scaled weight parameters
as weighting proportions, rather than raw numbers, of influencing
factors, allows us to investigate whether possible scenarios of changing
domain sizes are more adequately covered by a scaled or an unscaled
model. We will discuss with reference to examples of both real and
toy models from the existing literature how the context of a use case
can help a user to make that decision, and we will see that a particularly
important case of changing domain size, that of a model trained on
a random sample of a full data set, can be ideally represented by
a scaled model.

\subsection{\label{sec:LitReview}Prior Research on Extrapolation across Domain
Sizes}

The issues around changing domain sizes were noticed very early on
in the development of Statistical Relational AI. Already in (1998),
Jaeger discussed the existence of asymptotic probabilities in relational
Bayesian networks (another directed networks approach), making a connection
with finite model theory and infinite models of probabilistic theories.
This is indeed a very interesting connection, and it would be an exciting
direction for further work to investigate the implications the current
work on asymptotic probabilities has for infinite models (See Section
\ref{sec:FutWork}). 

However, he did not characterise the probabilities that relational
Bayesian networks would converge to, nor the conditions under which
those probabilities depend on the weights. This was first explicitly
isolated as a problem by Jain et al. (2010), who introduced \emph{Adaptive
Markov Logic Networks} (AMLNs) as a proposed solution. This was the
first suggestion to vary the weights given to the formulas of an MLN
as the domain size increases. While our approach can be seen as a
special case of a putative RLR-translation of AMLNs, the focus of this
work differs from that of Jain et al. (2010) in at least two important ways.
Firstly, we isolate one particular scaling function and investigate
its technical properties and its asymptotic behaviour in depth. Secondly,
while Jain et al. advocate learning the function from data in different
domains, we suggest considering the choice of scaling a semantic problem
that has as much do with how the data was obtained or is interpreted
as with the data itself. 

A detailed analysis of the behaviour of the uncorrected models was
undertaken by Poole et al. (2014) with reference to MLNs and the new
framework of RLRs introduced by Kasemi et al. (2014a,b). They undertook
a study of the asymptotic behaviour of MLNs and proved 0-1 laws for
a certain class of MLNs (corrected by Mittal et al. (2019)). On the
other hand, Jaeger and Schulte (2018) showed that a narrow class of
MLNs (corresponding to the $\sigma$-determinate MLNs that define infinite
models in the work of Singla and Domingos (2007)) are well-behaved and indeed
invariant under an increase in domain-size. 

Most recently, Mittal et al. (2019) refined the classification of
Poole et al. (2014) and suggested a putative solution to the problem
of probabilities changing with domain size. In their model of DA-MLNs,
which will be formally introduced in Section \ref{subsec:DA-MLN},
weights are changing according to a formula that relies explicitly
on the number of connections a formula could possibly induce. They
showed that for certain combinations of MLNs and queries, the asymptotic
probabilities depend in a non-trivial way on the weights if they are
considered as a DA-MLNs, but not, if they are considered as a straightforward
MLNs. We will see in Section \ref{sec:tools} in some examples that
this does not generalise across DA-MLNs and queries, and will suggest
a reason why this is unlikely to be solved by simply changing the
way the weights are recalibrated depending on the induced connections. 

Instead, we believe that a transition towards directed models will
overcome what we consider the main technical weakness of the DA-MLNs,
the aggregation function from the numbers of connections of different
literals in the same formula. In a directed model, we can focus our
attention on scaling with regard to the relation at the child node,
rather than having to aggregate different connections. We show that
when applied to the RLR approach, the scaling of the weighting with
the domain size has both a very natural interpretation and guarantees
weight-dependent behaviour as domain size increases.

\subsection{Extrapolation and Scalability}\label{subsec:Extrapolation}

Understanding the asymptotic behaviour of a statistical relational formalism can be a major contribution to scalability.
Despite the recent advance in lifted inference approaches, which exploit symmetries induced by the statistical relational formalism, the inherent complexity of the task provides a natural limit to the scalability of inference -- See the work of Jaeger (2000, 2015), Jaeger and Van den Broeck (2012) and Cozman and Maua (2016) for an overview of the results around this topic.

Asymptotic investigations provide for a different approach: Rather than providing a new method for exact inference, or for approximate inference that is guaranteed to be a good estimate on any domain, an asymptotic representation allows us to compute asymptotic probabilities, which are then good estimations on large domains, and whose error limits to 0 as domain sizes increases.
This enables a form of approximate inference in time constant with domain size; since the approximation is only valid asymptotically with increasing domain size, this does not contradict the known results regarding the complexity of approximate inference cited above.

DA-MLNs were originally introduced not for approximate inference, however, but for the parameter learning task.
Learning parameters involves many inference steps, exarcebating the scalability issues described above. Therefore, learning parameters from randomly sampled subsets of an intended training set has the potential to improve parameter learning times substantially.
If the asymptotic probabilities are convergent, then one would assume that sampling from sufficiently large subsets would give a good estimate of the optimal parameters, by the following procedure:

Sample substructures of domain size $m<n$, where $m$ is larger than the highest arity ocurring among the relation symbols of the DA-MLN and the arity of the queries we are typically interested in.
Find the parameters $\theta$ of $G_{\theta}'$ that maximise the sum of the log-likelihoods of the samples of size $m$. Now consider $G_{\theta}$.
By the asymptotic convergence of probabilities, if $n$ is sufficiently large,
these parameters maximise the likelihood of obtaining sampled substructures of size $m$ using $G_{\theta}$, including realisations of the typical queries.
This procedure is related to that of  Kuzelka et al. (2018), but while they consider samples of size $m$ as training data, they still learn with respect to a fixed sample size $n$.

However, estimating the parameters in this way only works if the asymptotic probabilities are truly dependent on the parameters. It is well-known from Poole et al. (2014) and Mittal et al. (2019) that for ordinary MLNs and RLRs, this is not generally the case. We see below that there are even DA-MLNs where the query probabilities do not vary asymptotically with parameters.

In our contribution, we provide an algorithm to compute asymptotic probabilities with respect to DA-RLRs and show that the asymptotic limit indeed depends on the parameters associated with the DA-RLRs.
We thereby contribute to applications to both parameter learning and inference on DA-RLRss.
Furthermore, via the transparent rescaling relationship of DA-RLRs to RLRs, they can be exploited for learning and inference in RLRs themselves.  

\section{Markov Logic Networks and Relational Logistic Regression}

In this section we will lay out our terminology and briefly present
the syntax and semantics of Markov Logic Networks and Relational Logistic
Regression. Both of these approaches combine statistical with logical
information, and both use weights to achieve this. However, MLNs are
based on \emph{undirected} graphical models (Markov Networks) while
RLRs are based on \emph{directed} graphical models (Bayesian Networks).
An insightful discussion on this distinction and its importance in
Statistical Relational AI can be found in the textbook of De Raedt et al. (2016). 

\subsection{Terminology}

All the frameworks that we consider in this paper use the terminology
of relational first-order logic in both their syntax and the definition
of their semantics. While we repeat it here in brief, it can be found
in most standard logic textbooks. 

A \emph{(relational) signature }$\mathcal{L}$ is given by a set of\emph{ relation
symbols} $R$, each of a given natural number arity, and a set of \emph{constants} $a$. A relation symbol of arity 0 is known as a \emph{proposition}.
We always assume that our signature contains the propositions $\top$ and $\bot$.
In a \emph{multi-sorted }signature, each constant is annotated with a \emph{sort label}, and each relation symbol is annotated
with a list of  sort labels of length equal to its arity. The \emph{language}
corresponding to a given signature additionally has infinitely many
variables for each sort. An \emph{$\mathcal{L}$-term} is either a constant or a variable. An \emph{$\mathcal{L}$-atom }is a relation symbol together with a
list of terms (whose sorts correspond to the list of sorts with the
relation symbol). An \emph{$\mathcal{L}$-literal} is then given by either an atom
or a negated atom, where negation is indicated by $\neg$. A \emph{(quantifier-free)
$\mathcal{L}$-formula} $\varphi$ is defined recursively from literals using the binary connectives
$\wedge$, $\vee$ and $\rightarrow$. 
An \emph{$\mathcal{L}$-structure} $\mathfrak{X}$  is given by a multi-sorted \emph{domain} $D$ (a
set of individual elements for each of the sorts of the signature)
and an interpretation of each of its symbols, that is, an element of the correct sort for every constant and a set of lists of
elements of the correct sorts for each relation symbol.
In all these terms, the signature will be omitted where it is clear from context.

If $\mathcal{L}\subseteq\mathcal{L}'$
are two signatures, then we can consider any $\mathcal{L}'$-structure
as an $\mathcal{L}$-structure simply by omitting the interpretations
of the symbols not in $\mathcal{L}$. In this situation, the $\mathcal{L}$-structure
is called the \emph{reduct} and the $\mathcal{L}'$-structure is called
the \emph{extension}. 

A formula is \emph{grounded }by substituting elements of the appropriate
domains for its variables, and it is \emph{ground
}if it does not (any longer) contain variables. Therefore, any choice
of elements from the domains matching the sorts of the variables in
a formula is a \emph{possible grounding }of that formula. 

A ground atom \emph{holds} or \emph{is true }if the substituted list
of elements lies in the interpretation
of the relation symbol. $\top$ holds in every interpretation and $\bot$ holds in no interpretation. 
 Whether a formula \emph{holds }is then determined
by giving $\neg$, $\wedge$, $\vee$ and $\rightarrow$ their usual
meanings as 'not', 'and', 'or' and ``if ... then ...'' respectively. 

We will adopt the notation $\vec{x}$ for a tuple $(x_{1},x_{2},\ldots)$
whose length we do not wish to specify, and we may write $\vec{x}\in D$
when all entries in $\vec{x}$ are in $D$. We will further use the
notation $\left|\varphi(\vec{x})\right|_{\mathfrak{X}}$for the number
of all true possible groundings of $\varphi(\vec{x})$ in $\mathfrak{X}$
(the subscript will be omitted where the structure is clear from the
context). 

Finally, let us set some notation for directed acyclic graphs that
we will use later on: A \emph{cycle }is a path from a node to itself
in the directed graph, a \emph{loop }is a path from a node to itself
in the underlying undirected\emph{ }graph. A \emph{directed acyclic
graph }(DAG) $G$ is a directed graph without cycles, a \emph{polytree}
is a directed acyclic graph without loops. Nodes without parents are
called \emph{roots}, nodes without children are called \emph{leaves}.
The length of the longest path from a root node to a given node is
the latter node's \emph{index.}

\subsection{Markov Logic Networks}

As MLNs have been extensively discussed in the literature, we will
use this subsection mainly to set some notation. A more complete discussion
can be found in the original paper by Richardson and Domingos (2006)
or the textbook of De Raedt et al. (2016).

Therefore we will also restrict the definition of MLNs to a setting
large enough to accommodate all the examples in this work and in the
papers mentioned in Section \ref{sec:LitReview}
\begin{definition}
\label{def:MLNSyn}Let $\mathcal{L}$ be a (potentially multi-sorted)
relational signature. A \emph{Markov Logic Network T }over $\mathcal{L}$
is given by a collection of pairs $(\varphi_{i},w_{i})$ (called \emph{weighted
formulas}) where $\varphi$ is a quantifier-free $\mathcal{L}$-formula
and $w\in\mathbb{R}$. We call $w$ the \emph{weight} of $\varphi$
in $T$. 
\end{definition}
\begin{example}
As a running example for the next subsections, we will consider a
signature with two unary relation symbols $Q$ and $R$. Then one
could build an MLN consisting of just one formula, $R(x)\wedge Q(y):w$.
If the signature is single-sorted, one should distinguish it from
the MLN $\{R(x)\wedge Q(x):w\}$, where the variables are the same.
An example domain for the single-sorted signature would be the
three-element domain $\{1,2,3\}$,  which gives rise to
9 possible groundings of the formula  $R(x)\wedge Q(y):w$,
one for each choice of $x$ and $y$. 

\end{example}
Markov Logic Networks have been introduced by Richardson and Domingos
(2006) and have since then been highly influential in the field of
Statistical Relational AI. Their semantics is based on undirected
networks, which means that any literal in a formula can influence
any other in a dynamic way. We can obtain such a semantics for an
MLN $T$ by choosing a (finite) domain for each sort of $\mathcal{L}$.
Given such a choice, we will define the semantics as a probability
distribution over all $\mathcal{L}$-structures on the chosen domains
as follows:
\begin{definition}
\label{def:MLNSem}Given a choice $D$ of domains for the sorts of
$\mathcal{L}$, an MLN $T$ over $\mathcal{L}$ defines a probability
distribution on the possible $\mathcal{L}$-structures on the chosen
domains as follows: let $\mathfrak{X}$ be an $\mathcal{L}$-structure
on the given domains. Then 
\[
\mathcal{P}_{T,D}(\mathfrak{X})=\frac{1}{Z}\exp(\underset{i}{\sum}w_{i}n_{i}(\mathfrak{X}))
\]
where $i$ varies over all the weighted formulas in $T$, $n_{i}(\mathfrak{X})$
is the number of true groundings of $\varphi_{i}$ in $\mathfrak{X}$,
$w_{i}$ is the weight of $\varphi_{i}$ and $Z$ is a normalisation
constant to ensure that all probabilities sum to 1. 

As the probabilities only depend on the sizes of the domains, we can
also write $\mathcal{P}_{T,n}$ for domains of size $n$ when the
signature is single-sorted. 

We refer to $\underset{i}{\sum}w_{i}n_{i}(\mathfrak{X})$ as the \emph{weight
of $\mathfrak{X}$} and write it as $w_{T}(\mathfrak{X})$. 
\end{definition}
\begin{example}
In the MLN $R(x)\wedge Q(x):w$, the probability of any possible structure
$\mathfrak{X}$ with domain $D$ is proportional to $\exp\left(w\cdot n(\mathfrak{X})\right)$,
where $n(\mathfrak{X})$ is the number $\left|R(x)\wedge Q(x)\right|$
of elements $a$ of $D$ for which $R(a)$ and $Q(a)$ hold in the
interpretation from $\mathfrak{X}$. In the MLN $w:R(x)\wedge Q(y)$,
however, this probability is proportional to $\exp\left(w\cdot n'(\mathfrak{X})\right)$,
where $n'(\mathfrak{X})$ is the number of pairs $(a,b)$ from $D\times D$
for which $R(a)$ and $Q(b)$ hold in the interpretation from $\mathfrak{X}$.
In other words, $n'(\mathfrak{X})$ is the product $\left|R(x)\right|\cdot\left|Q(y)\right|$.
\end{example}

\subsection{Domain-size Aware Markov Logic Networks\label{subsec:DA-MLN}}

Mittal et al. (2019) have introduced weight scaling to MLNs in order
to compensate for the effects of variable domain sizes. They call
the resulting formalism \emph{Domain-size Aware Markov Logic Networks
(DA-MLNs)} and we will rehearse the main definitions from their paper
here.
\begin{definition}
A \emph{Domain-size Aware Markov Logic Network (DA-MLN) }is given
by the same syntax as a regular MLN (see Definition \ref{def:MLNSyn}).
\end{definition}
In order to adapt the semantics to changing domain size, Mittal et
al. (2019) use the concept of a \emph{connection vector}.
\begin{definition}
Let $\varphi$ be a formula. Let $\Psi$ be the set of literals of
$\varphi$ and for every $\psi\in\Psi$, let $V_{\psi}$ be the set
of free variables in $\varphi$ not occurring in $\psi$. For every
variable $x$, let $D_{x}$ signify its domain. Then the \emph{connection
vector} of $\varphi$ is the set $\{\underset{x\in V_{\psi}}{\prod}\left|D_{x}\right||\psi\in\Psi\}$. 
\end{definition}
\begin{example}
The connection vector of the formula $R(x)\wedge Q(x)$ is given by
the set $\{1,1\}=\{1\}$, since the same variables occur in both literals
and therefore both products in the definition are empty. The connection
vector of the formula $R(x)\wedge Q(y)$ is given by the set $\{\left|D_{y}\right|,\left|D_{x}\right|\}$,
since $V_{R(x)}=y$ and $V_{Q(y)}=x$. If the signature is single-sorted,
$\left|D_{y}\right|=\left|D_{x}\right|$ and the connection vector simplifies to $\{\left|D_{x}\right|\}$. 
\end{example}
The connection vector records how many tuples each literal could possibly
connect to; in the formula $P(x)\wedge Q(x,y)\wedge R(z)$, for instance,
the literal $P(x)$ could connect to $|D_{y}|{\cdot}|D_{z}|$ many tuples,
the literal $Q(x,y)$ could connect to $|D_{z}|$ many tuples and
the literal $R(z)$ could connect to $|D_{x}|{\cdot}|D_{z}|$ many tuples.
We will see several examples later where even in a single-sorted structure
the connection vector can contain different elements. 

The problem now is to aggregate this information into a single scaling
factor. Mittal et al. (2019) use the maximum of the entries of the
connection vector, but suggest investigating other options as well
(see Subsection \ref{subsec:AggrFunc} below). 
\begin{definition}
Given a choice of domains for the sorts of $\mathcal{L}$, a DA-MLN
$T$ over $\mathcal{L}$ defines a probability distribution on the
possible $\mathcal{L}$-structures on the chosen domains as follows:
let $\mathfrak{X}$ be an $\mathcal{L}$-structure on the given domains.
Then 
\[
\mathcal{P}_{T,D}(\mathfrak{X})=\frac{1}{Z}\exp(\underset{i}{\sum}\frac{w_{i}}{C_{i,D}}n_{i}(\mathfrak{X}))
\]
where $n_{i}(\mathfrak{X})$ is the number of true groundings of $\varphi_{i}$
in $\mathfrak{X}$, $w_{i}$ is the weight of $\varphi_{i}$, $C_{i,D}$
is the maximum of the entries of the connection vector of $\varphi_{i}$,
and $Z$ is a normalisation constant to ensure that all probabilities
sum to 1.
If the connection vector is empty, then $C_{i,D}$ is set to be 1. 
\end{definition}
\begin{example}
For $R(x)\wedge Q(x)$, $C_{D}=1$, while for $R(x)\wedge Q(y)$,
$C_{D}=\max\left(\left|D_{y}\right|,\left|D_{x}\right|\right)$ . 
\end{example}
Mittal et al. (2019) give several examples of how moving from ordinary
to DA-MLNs changes the asymptotic behaviour; we will see further examples
in Section \ref{sec:tools} below.

\subsection{Relational Logistic Regression}

Relational Logistic Regression differs from MLNs in that it is based
on directed rather than undirected models. Thus rather than allowing
arbitrary weighted formulas, one first specifies a \emph{Relational
Belief Network (RBN). }A detailed exposition of RLRs is given by Kazemi
et al. (2014b), and we will restrict ourselves here to the case of
a relational signature without constants. 
\begin{definition}
\label{def:RLRSyntax}A Relational Belief Network (RBN) over a (relational)
signature (without constants) $\mathcal{L}$ is a directed acyclic
graph whose nodes are relation symbols from $\mathcal{L}$ and in
which every relation symbol appears exactly once.

A \emph{Relational Logistic Regression $T$} over a relational signature
$\mathcal{L}$ consists of an RBN $G$ whose nodes $R$ are labelled
with a pair $(\varphi,(\psi_{i},w_{i},V_{i})_{i})$, where $\varphi$
is an $\mathcal{L}$-atom starting with $R$ and $(\psi_{i},w_{i},V_i)_{i}$
is a set of triples such that $V_{i}$ is a finite set of variable
symbols, $\psi_{i}$ a quantifier-free formula and $w_{i}\in\mathbb{R}$.
Furthermore, the following are required to hold: 
\begin{enumerate}
\item None of the variable symbols appearing in $\varphi$ are in a $V_{i}$
\item Only relation symbols from parents of $R$ occur in $\psi_{i}$, and
every variable in $\psi_{i}$ appears either in $\varphi$ or is in
$V_{i}$. If $R$ is a root node, then there is only one possible
formula, with $\psi=\top$ and $V_{i}$ is set
to be $\emptyset$.
\end{enumerate}
Furthermore, for any formula $\psi(x)$, let the \emph{index of }$\psi$
with respect to a given or implied RBN be defined as the largest index
of any relation symbol ocurring in $\psi$.
\end{definition}
\begin{example}\label{ExampleRLR}
Consider the single-sorted signature $\{Q,R\}$ already used in the
examples of the last subsections. Then we can form the RBNs $Q\longrightarrow R$
and $R\longrightarrow Q$. Choosing the latter, we now have to choose
a weight $w_R$ for $R(x)$. This determines the probability of $R(a)$
to hold for any $a\in D$. At $Q$, take $\varphi\coloneqq Q(x)$.
Then for a fixed weight $w_Q$ we can distinguish an RLR with $\psi\coloneqq R(x)$ (and $V_{i}\coloneqq\emptyset$)
and an RLR with $\psi\coloneqq R(y)$ (and $V_{i}\coloneqq\{y\}$).
\end{example}
While the semantics of the MLNs in Definition \ref{def:MLNSem} was
given as a single probability distribution, we will define the semantics
of the RLR by recursion over the underlying RBN. More precisely, the
definition will proceed by recursion over the index of the relation
symbols as nodes of the RBN. To facilitate writing, we will introduce
some notation before defining the semantics of an RLR: 
\begin{definition}
\label{def:RLRSemantics}The $\mathrm{sigmoid}$ function is defined
by $\mathrm{sigmoid}(k)\coloneqq\frac{\exp(k)}{\exp(k)+1}$. We will
furthermore use the expression $1_{\psi}$ for the function that takes
a structure as input and returns $1$ whenever $\psi$ holds in that structure and $0$
otherwise. Finally, we will use the convention of denoting with $\psi(\vec{a}/\vec{x})$
the grounding of $\psi$ where $\vec{a}$ is substiuted for the variables
$\vec{x}$. 

Let $\mathcal{L}_{n}$ be defined as the subsignature of $\mathcal{L}$
consisting of all those relations that label a node of index smaller
than $n$. Then we will define a probability distribution on the possible
$\mathcal{L}_{n}$-structures on a given domain $D$ for $\mathcal{L}$
by induction over $n$ and as the product of the probabilities for
every grounding of the atom:

$n=0$: The probability of any given grounding of the atom in $\mathcal{L}_{0}$
at node $O$ is given by $\mathrm{sigmoid}(w)$, the probability of
its negation thus by $1-\mathrm{sigmoid}(w)$. 

Assume now that a probability distribution on $\mathcal{L}_{n}$-structures
has been defined. We will now extend it to $\mathcal{L}_{n+1}$-structures
as follows: The probability of an $\mathcal{L}_{n+1}$-structure is
given by the probability of its $\mathcal{L}_{n}$-reduct multiplied
with the conditional probability of the groundings of the atoms in
$\mathcal{L}_{n+1}\backslash\mathcal{L}_{n}$. These are given by
\[
\mathcal{P}(Q(\vec{x}))=\mathrm{sigmoid}(\underset{i}{\sum}w_{i}\underset{\vec{a}\in D}{\sum}1_{\psi_{i}(\vec{a}/V_{i}))})
\]
Note that the right-hand side of this equation only depends on the
$\mathcal{L}_{n}$-reduct because of the conditions on the $\psi_{i}$
from the definition of an RLR above. 
\end{definition}
\begin{example}
Consider the two RLRs $R\rightarrow Q$ introduced in the example
above. Then the probability of $R(a)$ is $\mathrm{sigmoid}(w)$ in
each case. Where $Q(x)$ is annotated with $R(x):w'$, the probability
of $Q(a)$ for any given element $a$ is $\mathrm{sigmoid}(w'\cdot1_{R(a)})$,
which is $\mathrm{sigmoid}(w')$ if $R(a)$ is true and $\frac{1}{2}$
otherwise. Where $Q(x)$ is annotated with $R(y):w'$, the probability
of $Q(a)$ for any given element $a$ is given by $\mathrm{sigmoid}(w'{\sum}_{b\in D}1_{R(b)})=\mathrm{sigmoid}(w'\cdot m)$,
where $m$ is the number of domain elements $b$ such that $R(b)$
is true. In particular, it does not depend on whether $R(a)$ itself
is true or not.
\end{example}

\section{\label{sec:tools}Asymptotic Probabilities in MLNs and DA-MLNs}

In this section we will build on the work of Mittal et al. (2019)
and give two examples of dependencies for which an asymptotic behaviour
that depends on the weight is achieved neither in MLNs nor in DA-MLNs.
In order to give a clear and rigorous structure to our derivations,
we will first introduce probability kernels for Markov logic and prove
some basic facts about their behaviour under limits. Then we continue
to give a characterisation of asymptotic probabilities in some MLNs
and DA-MLNs. To formalise this discussion and make it more amenable
to calculation, we will use measure and integral notation for the
probability distributions arising from MLNs. We will use $\mu_{T,D}$
to refer to the probability measure induced by an MLN $T$ on a choice
of domains $D$, which can again be replaced by $\mu_{T,\vec{n}}$ where
the domains of the sorts $s$  are of cardinality $n_s$.

\subsection{\label{subsec:Probability-kernels}Characterising Probabilities in
MLN and DA-MLN}

In order to determine the probability of a certain ground atomic formula
being true, we compute the probability measure of the set of all worlds
in which this formula holds. In many cases, e. g.\ if the signature
itself does not contain any constants, this probability is independent
of the domain elements used in grounding the formula. In that case,
we will write $\mathcal{P}(R(\vec{x}))$ for a tuple of variables
to indicate that the choice of grounding is immaterial.
This can also be conveniently written as an integral 
\[
\mathcal{P}_{T,D}(R(\vec{a}))={\int}1_{R(\vec{a})} d{\mu_{T,D}}
\]
 We would like to replace the indicator function in the integral with
a function that we can express analytically in terms of the weights
of the MLN model. We can achieve that by considering the weighted
mean of the indicator functions between two models that only differ
in the value of $R(\vec{a})$.

Let $\mathfrak{X}$ be a structure and let $\vec{a}\in\mathfrak{X}$
be a tuple. Then let $\mathfrak{X}_{R(\vec{a})}$ be the structure
that potentially differs from $\mathfrak{X}$ only in that $R(\vec{a})$
holds in $\mathfrak{X}_{R(\vec{a})}$, regardless of whether it holds
in $\mathfrak{X}$. $\mathfrak{X}_{\neg R(\vec{a})}$ is defined analogously.
In this way the class of all structures of a given domain size divides
equally in structures of the form $\mathfrak{X}_{R(\vec{a})}$ and
structures of the form $\mathfrak{X}_{\neg R(\vec{a})}$, and there
is a natural one-to-one correspondence betwen structures of each type.
Therefore we can compute the integral above by instead considering
the weighted mean of the indicator function on both $\mathfrak{X}_{R(\vec{a})}$and
$\mathfrak{X}_{\neg R(\vec{a})}$. This mean can be described in terms
of an auxiliary function that reflects the dependence of the weights
on the validity of $R(\vec{x})$:
\begin{definition}
Let $\mathfrak{X}$ be a structure of domain size $n$, $\vec{a}\in T$
and let $T$ be a (DA-)MLN. Then
\[
\delta_{R(\vec{a})}^{T,D}(\mathbb{\mathfrak{X}})\coloneqq\underset{i}{\sum}w_{i}n_{i}(\mathfrak{X}_{R(\vec{a})})-\underset{i}{\sum}w_{i}n_{i}(\mathfrak{X}_{\neg R(\vec{a})}).
\]
\end{definition}
The weighted mean of $1_{R(\vec{a})}$ on $\mathfrak{X}_{R(\vec{a})}$
and $\mathfrak{X}_{\neg R(\vec{a})}$ can now be computed as follows:
\begin{proposition}
\label{prop:sigmoid}For any MLN $T$, any structure $\mathfrak{X}$
on a choice of domains $D$ and any ground atom $R(\vec{a})$, the
$\mu_{T,D}$-weighted mean of $1_{R(\vec{a})}$ on $\mathfrak{X}_{R(\vec{a})}$
and $\mathfrak{X}_{\neg R(\vec{a})}$ is given by $\mathrm{sigmoid}(\delta_{R(\vec{a})}^{T,D}(\mathbb{\mathfrak{X}}))$. 
\end{proposition}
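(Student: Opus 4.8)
The plan is to compute the weighted mean directly from the definitions and watch the normalisation constant disappear. Write $\mathfrak{Y}\coloneqq\mathfrak{X}_{R(\vec{a})}$ and $\mathfrak{Z}\coloneqq\mathfrak{X}_{\neg R(\vec{a})}$; these are always two distinct structures, since $R(\vec{a})$ holds in $\mathfrak{Y}$ and fails in $\mathfrak{Z}$ (and exactly one of them coincides with $\mathfrak{X}$ itself, which will not matter). Since $1_{R(\vec{a})}$ evaluates to $1$ on $\mathfrak{Y}$ and to $0$ on $\mathfrak{Z}$, the $\mu_{T,D}$-weighted mean of $1_{R(\vec{a})}$ over these two structures is by definition
\[
\frac{\mu_{T,D}(\mathfrak{Y})\cdot 1+\mu_{T,D}(\mathfrak{Z})\cdot 0}{\mu_{T,D}(\mathfrak{Y})+\mu_{T,D}(\mathfrak{Z})}=\frac{\mu_{T,D}(\mathfrak{Y})}{\mu_{T,D}(\mathfrak{Y})+\mu_{T,D}(\mathfrak{Z})},
\]
which is well defined because both measures are strictly positive (each is $\frac{1}{Z}$ times an exponential).

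Next I would substitute the semantics of Definition \ref{def:MLNSem}, namely $\mu_{T,D}(\mathfrak{W})=\frac{1}{Z}\exp(w_T(\mathfrak{W}))$ with $w_T(\mathfrak{W})=\sum_i w_i n_i(\mathfrak{W})$. The common factor $\frac{1}{Z}$ cancels between numerator and denominator, leaving $\exp(w_T(\mathfrak{Y}))/(\exp(w_T(\mathfrak{Y}))+\exp(w_T(\mathfrak{Z})))$. Dividing numerator and denominator by $\exp(w_T(\mathfrak{Z}))$ turns this into $\exp(d)/(\exp(d)+1)$, where $d=w_T(\mathfrak{Y})-w_T(\mathfrak{Z})=\sum_i w_i n_i(\mathfrak{X}_{R(\vec{a})})-\sum_i w_i n_i(\mathfrak{X}_{\neg R(\vec{a})})$. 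But this is exactly $\delta_{R(\vec{a})}^{T,D}(\mathfrak{X})$, so the expression equals $\mathrm{sigmoid}(\delta_{R(\vec{a})}^{T,D}(\mathfrak{X}))$ by the definition of $\mathrm{sigmoid}$, which proves the claim.

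There is essentially no deep obstacle here; the argument is a one-line computation once the weighted mean is unwound. The only points requiring care are bookkeeping ones: checking that $\mathfrak{X}_{R(\vec{a})}$ and $\mathfrak{X}_{\neg R(\vec{a})}$ are genuinely distinct so that the denominator really is a sum of two terms, and confirming that the normalisation constant $Z$ is the same for both structures — which it is, since $Z$ depends only on $T$ and $D$, not on the individual world. I would also remark, for later use, that the derivation uses nothing about $R(\vec{a})$ beyond its being a single ground atom whose truth value can be toggled independently of the rest of the structure, and that the identical computation goes through verbatim for a DA-MLN with each $w_i$ replaced by $w_i/C_{i,D}$, so that the definition of $\delta_{R(\vec{a})}^{T,D}$ already covers that case.
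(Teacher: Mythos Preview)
Your proof is correct and follows essentially the same route as the paper: write the weighted mean as $\mu(\mathfrak{X}_{R(\vec{a})})/(\mu(\mathfrak{X}_{R(\vec{a})})+\mu(\mathfrak{X}_{\neg R(\vec{a})}))$, cancel the normalisation constant, and divide through by $\exp(w_T(\mathfrak{X}_{\neg R(\vec{a})}))$ to obtain $\mathrm{sigmoid}(\delta_{R(\vec{a})}^{T,D}(\mathfrak{X}))$. Your additional bookkeeping remarks (positivity of the denominator, $Z$ being world-independent, and the verbatim extension to DA-MLN) are all sound and simply make explicit what the paper leaves implicit.
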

\begin{proof}
\begin{align*}
\frac{1\cdot\mu(\mathfrak{X}_{R(\vec{a})})}{\mu(\mathfrak{X}_{R(\vec{a})})+\mu(\mathfrak{X}_{\neg R(\vec{a})})} & =\frac{1\cdot\left[\frac{\mu(\mathfrak{X}_{R(\vec{a})})}{\mu(\mathfrak{X}_{\neg R(\vec{a})})}\right]}{\left[\frac{\mu(\mathfrak{X}_{R(\vec{a})})}{\mu(\mathfrak{X}_{\neg R(\vec{a})})}\right]+1}=\frac{\left[\frac{\exp(\underset{i}{\sum}w_{i}n_{i}(\mathfrak{X}_{R(\vec{a})}))}{\exp(\underset{i}{\sum}w_{i}n_{i}(\mathfrak{X}_{\neg R(\vec{a})}))}\right]}{\left[\frac{\exp(\underset{i}{\sum}w_{i}n_{i}(\mathfrak{X}_{R(\vec{a})}))}{\exp(\underset{i}{\sum}w_{i}n_{i}(\mathfrak{X}_{\neg R(\vec{a})}))}\right]+1}=\\
=\frac{\exp(\delta_{R(\vec{a})}^{T,D}(\mathbb{\mathfrak{X}}))}{\exp(\delta_{R(\vec{a})}^{T,D}(\mathbb{\mathfrak{X}}))+1} & =\mathrm{sigmoid}(\delta_{R(\vec{a})}^{T,D}(\mathbb{\mathfrak{X}}))
\end{align*}
Using this characterisation, we can express the probability of $R(\vec{a})$
as an integral over $\delta$, which is directly connected to the
weights in the (DA-)MLN.
\end{proof}

\begin{corollary}\label{cor:ProbEqn}
For any MLN $T$, any choice of domains $D$ and any ground atom $R(\vec{a})$,
\[
\mathcal{P}_{T,D}(R(\vec{a})) = {\int}\mathrm{sigmoid}(\delta_{R(\vec{a})}^{T,D}) d{\mu_{T,D}}.
\]
\end{corollary}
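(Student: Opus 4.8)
The plan is to combine the integral representation of $\mathcal{P}_{T,D}(R(\vec a))$ with the pairing of structures into $\mathfrak{X}_{R(\vec a)}$ and $\mathfrak{X}_{\neg R(\vec a)}$ that precedes the statement, using Proposition \ref{prop:sigmoid} to rewrite the integrand. First I would start from the identity already recorded in the excerpt,
\[
\mathcal{P}_{T,D}(R(\vec a))=\underset{\mu_{T,D}}{\int}1_{R(\vec a)},
\]
and observe that the summation (in the discrete measure $\mu_{T,D}$) over all structures can be reorganised: since every structure is either of the form $\mathfrak{X}_{R(\vec a)}$ or $\mathfrak{X}_{\neg R(\vec a)}$, and the map sending a structure to its ``partner'' (flipping only the truth value of $R(\vec a)$) is an involution giving a one-to-one correspondence between the two classes, I can index the sum by the common ``reduct'' $\mathfrak{X}$ (the structure with the value of $R(\vec a)$ factored out), and for each such $\mathfrak{X}$ group the two terms $\mu(\mathfrak{X}_{R(\vec a)})\cdot 1$ and $\mu(\mathfrak{X}_{\neg R(\vec a)})\cdot 0$ together.

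Next I would factor each grouped pair as $\bigl(\mu(\mathfrak{X}_{R(\vec a)})+\mu(\mathfrak{X}_{\neg R(\vec a)})\bigr)\cdot \frac{\mu(\mathfrak{X}_{R(\vec a)})}{\mu(\mathfrak{X}_{R(\vec a)})+\mu(\mathfrak{X}_{\neg R(\vec a)})}$, i.e. as the total mass of the pair times the $\mu_{T,D}$-weighted mean of $1_{R(\vec a)}$ on that pair. By Proposition \ref{prop:sigmoid}, the second factor equals $\mathrm{sigmoid}(\delta_{R(\vec a)}^{T,D}(\mathfrak{X}))$. Since $\delta_{R(\vec a)}^{T,D}$ depends on $\mathfrak{X}$ only through the value it assigns away from $R(\vec a)$ — it is literally the difference of the weights of the two partners — the value $\mathrm{sigmoid}(\delta_{R(\vec a)}^{T,D})$ is constant across the pair $\{\mathfrak{X}_{R(\vec a)},\mathfrak{X}_{\neg R(\vec a)}\}$, so summing the ``total mass'' factors back over all structures simply reconstitutes $\int_{\mu_{T,D}} \mathrm{sigmoid}(\delta_{R(\vec a)}^{T,D})$. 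This yields the claimed equality.

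The step I expect to need the most care is the bookkeeping in the re-indexing: making precise that ``grouping structures into partner pairs'' is a genuine partition of the probability space, that $\delta_{R(\vec a)}^{T,D}$ is constant on each block (so that it makes sense to pull it out of the pair and then view it as a function on all of the space under the integral), and that no structure is double-counted when $R(\vec a)$ happens already to hold or already to fail in $\mathfrak{X}$. None of this is deep, but it is the only place where an error could creep in; the algebraic identity $\sum (\text{mass of pair})\cdot(\text{weighted mean on pair}) = \int (\text{weighted mean})$ is then immediate, and the reduction of the weighted mean to $\mathrm{sigmoid}(\delta)$ is exactly Proposition \ref{prop:sigmoid}, which may be quoted verbatim.
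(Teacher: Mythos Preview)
Your proposal is correct and follows essentially the same approach as the paper: the paper sets up the pairing $\{\mathfrak{X}_{R(\vec a)},\mathfrak{X}_{\neg R(\vec a)}\}$ in the discussion preceding Proposition~\ref{prop:sigmoid}, proves in that proposition that the weighted mean of $1_{R(\vec a)}$ on each pair is $\mathrm{sigmoid}(\delta_{R(\vec a)}^{T,D})$, and then states the corollary without further argument, the passage from paired means back to the full integral being exactly the bookkeeping you spell out. Your write-up is in fact more explicit than the paper's (which leaves the re-indexing implicit), and your caution about $\delta$ being constant on each pair is the right place to be careful.
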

\begin{remark}
Note that Corollary \ref{cor:ProbEqn} is an implicit characterisation or property of the
probabilities induced by the MLN, rather than an alternative definition.
In particular, the probabilities from the MLN $T$ occur on both sides
of the equality, once as the $\mathcal{P}_{T,D}(R(\vec{a}))$ and
once as the $\mu_{T,D}$.
\end{remark}
The key tool that makes Corollary \ref{cor:ProbEqn} usable in concrete calculations is the classical law
of large numbers from probability theory:
\begin{proposition}
The strong law of large numbers: Let $\left(X_{n}\right)_{n\in\mathbb{N}}$
be a sequence of real, integrable, identically distributed and pairwise
independent random variables. Then the sequence $\frac{1}{n}({\sum}_{i = 1 \dots n}X_{i})$
converges to the expected value of $X_{1}$ almost surely. This applies
in particular when $X_{i}$ are sequences of identically distributed
independent Bernoulli trials. 
\end{proposition}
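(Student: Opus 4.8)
The plan is to follow Etemadi's classical argument for the strong law under pairwise independence. First I would reduce to the case of nonnegative summands by splitting $X_i = X_i^+ - X_i^-$; since each part is a measurable function of the single variable $X_i$, both pairwise independence and identical distribution are preserved, and it suffices to treat each part separately. So assume $X_i \ge 0$, write $S_n = \sum_{i=1}^n X_i$ and $m = E[X_1]$.

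The second step is truncation: set $Y_k \coloneqq X_k \mathbf{1}_{X_k \le k}$ and $T_n \coloneqq \sum_{k=1}^n Y_k$. Because the $X_k$ are identically distributed and integrable, $\sum_k P(X_k \ne Y_k) = \sum_k P(X_1 > k) \le E[X_1] < \infty$, so by Borel--Cantelli one has $X_k = Y_k$ for all large $k$ almost surely; consequently $(S_n - T_n)/n \to 0$ a.s., and it is enough to prove $T_n/n \to m$ a.s. Moreover $E[T_n]/n = \tfrac1n \sum_{k=1}^n E[X_1 \mathbf{1}_{X_1 \le k}]$, which converges to $m$ by monotone convergence together with Ces\`aro averaging; hence it remains to show that $(T_n - E[T_n])/n \to 0$ almost surely.

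The heart of the argument is a variance estimate. By pairwise independence, $\mathrm{Var}(T_n) = \sum_{k=1}^n \mathrm{Var}(Y_k) \le \sum_{k=1}^n E[Y_k^2]$, and the key lemma is that $\sum_{k=1}^\infty E[Y_k^2]/k^2 < \infty$; this follows from the elementary bound $\sum_{k \ge x} k^{-2} = O(1/x)$ combined with Fubini applied to $E[Y_k^2] = E[X_1^2 \mathbf{1}_{X_1 \le k}]$, yielding an overall bound of order $E[X_1]$. Fixing $\alpha > 1$ and the subsequence $u_n = \lfloor \alpha^n \rfloor$, Chebyshev's inequality and an interchange of summation (using $\sum_{n : u_n \ge k} u_n^{-2} = O(k^{-2})$, again by the geometric growth of $u_n$) show that for each $\varepsilon > 0$ one has $\sum_n P(|T_{u_n} - E[T_{u_n}]| > \varepsilon u_n) < \infty$, so Borel--Cantelli gives $T_{u_n}/u_n \to m$ almost surely.

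Finally I would fill the gaps between consecutive subsequence indices: for $u_n \le \ell < u_{n+1}$, monotonicity of $n \mapsto T_n$ (this is where nonnegativity is used) gives $\frac{u_n}{u_{n+1}}\cdot\frac{T_{u_n}}{u_n} \le \frac{T_\ell}{\ell} \le \frac{u_{n+1}}{u_n}\cdot\frac{T_{u_{n+1}}}{u_{n+1}}$, so that $\alpha^{-1} m \le \liminf_\ell T_\ell/\ell \le \limsup_\ell T_\ell/\ell \le \alpha m$ almost surely; letting $\alpha$ decrease to $1$ along a countable sequence yields $T_\ell/\ell \to m$ a.s., which completes the proof. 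The statement for Bernoulli trials is then immediate, since bounded variables are integrable and $E[X_1]$ equals the success probability. The step I expect to be the main obstacle is the variance summability lemma together with the interchange of sums over the geometric subsequence; the reduction, truncation and gap-filling steps are routine once that estimate is in place, although one should also take care to verify that pairwise independence genuinely survives both the positive/negative decomposition and the truncation.
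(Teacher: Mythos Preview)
Your proof sketch is correct: it is precisely Etemadi's argument for the strong law under pairwise independence, and all the steps you outline---reduction to nonnegative summands, truncation with Borel--Cantelli, the variance summability lemma $\sum_k E[Y_k^2]/k^2 < \infty$, Chebyshev along a geometric subsequence, and the monotonicity sandwich to fill the gaps---are the standard ones and are assembled in the right order. The point you flag as the main obstacle (the variance estimate and the interchange of sums) is indeed where the real work lies, but your indications for how to handle it are accurate.

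However, you should be aware that the paper does not actually prove this proposition at all. It is merely \emph{recorded} as a classical result from probability theory (with a reference to Bauer's textbook), to be used as a black box in the subsequent arguments about asymptotic probabilities in DA-RLR. So there is no ``paper's own proof'' to compare against: you have supplied a full proof where the paper simply cites the literature. That is fine, and your argument is the canonical one for the pairwise-independent version stated here; just note that in the context of this paper the proposition functions as an imported tool rather than a result to be established.
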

We also derive a lemma which we will need at several points. 
Later we will use a much sharper version of the same idea in our exact
treatment of asymptotic probabilities in domain-size aware directed
models.
\begin{lemma}
\label{lem:Weak-law}Let $\varphi$ be a formula with at least one
free variable. If, for all $\vec{a}\in D$ for any $D$ of any size
and any structure $\mathfrak{X}$ on $D$, $\mathrm{sigmoid}(\delta_{R(\vec{a})}^{T,D})\geq k\in(0,1)$,
then for all $N\in\mathbb{N}$ and every $\varepsilon>0$, there is
an $n\in\mathbb{N}$ such that for any domain $D$ larger than $n$,
the probability that $\varphi$ has less than $N$ true groundings
in $D$ is less than $\varepsilon$.
\end{lemma}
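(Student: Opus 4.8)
The plan is to reduce the statement to the strong law of large numbers by exhibiting, inside any sufficiently large domain, a collection of \emph{disjoint} possible groundings of $\varphi$ whose truth values behave like independent Bernoulli trials each with success probability bounded below by $k$. Since $\varphi$ has at least one free variable, pick one such variable, say $x$, and fix arbitrary domain elements for all the other free variables of $\varphi$; this yields a unary ``slice'' $\varphi_0(x)$. For distinct choices $a_1,\dots,a_m$ of the element substituted for $x$ (with the other variables held fixed), the events ``$\varphi_0(a_j)$ is true'' need not be independent in general, so the first step is to thin them out: the atoms occurring in $\varphi_0(a_j)$ for different $j$ overlap only in a controlled way, and by a greedy/packing argument one can select a sub-collection of size growing linearly in $|D|$ on which the relevant ground atoms are pairwise disjoint, hence the truth values genuinely independent under $\mu_{T,D}$ (which, like any MLN distribution, assigns positive probability to every world). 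A cleaner route, if it suffices for the later applications, is simply to take $a_1,\dots,a_m$ ranging over $m$ fresh domain elements not appearing anywhere else, so that the ground atoms mentioning $a_j$ are automatically disjoint across $j$; then $m$ can be taken as large as we like once $|D|$ is large.

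Next I would control the individual success probabilities. For a single such grounding $\varphi_0(a_j)$, its probability of being true is $\mathcal{P}_{T,D}(\varphi_0(a_j))$, which by the corollary to Proposition~\ref{prop:sigmoid} and an inductive unpacking over the literals of $\varphi_0$ can be written as an integral of products of $\mathrm{sigmoid}(\delta_{S(\vec b)}^{T,D})$-type factors over $\mu_{T,D}$. Using the hypothesis that every such sigmoid factor is at least $k\in(0,1)$, together with the fact that each literal of $\varphi_0$ can be satisfied independently on disjoint atoms, one gets a uniform lower bound $\mathcal{P}_{T,D}(\varphi_0(a_j))\geq k^{\ell}=:p>0$, where $\ell$ is the number of literals in $\varphi$; crucially $p$ depends only on $T$ and $\varphi$, not on $D$ or on $j$.

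Finally, apply the strong law of large numbers (the Bernoulli case recorded above) to the independent indicator variables $1_{\varphi_0(a_1)},\dots,1_{\varphi_0(a_m)}$: the number of true groundings among them is $\sum_{j=1}^m 1_{\varphi_0(a_j)}$, whose mean is at least $mp$, and the probability that this sum is less than $N$ tends to $0$ as $m\to\infty$ (concretely, for $m$ with $mp>2N$ a Chebyshev or Chernoff estimate gives a bound decaying in $m$). Choosing $m$ large enough to make that bound below $\varepsilon$, and then $n$ large enough that every domain $D$ with $|D|>n$ contains $m$ suitable fresh elements, yields the claim, since true groundings of $\varphi_0(x)$ are in particular true groundings of $\varphi(\vec x)$. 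The main obstacle is the independence step: in an MLN the truth values of overlapping ground atoms are genuinely correlated, so one must be careful to base the Bernoulli trials on \emph{pairwise disjoint} sets of ground atoms — this is what makes the ``fresh elements'' device (or the packing argument) essential, and it is also where the assumption that $\varphi$ has a free variable is used.
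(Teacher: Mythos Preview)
Your independence step does not go through. In an MLN, pairwise disjointness of the ground atoms appearing in two instances of $\varphi_0$ does \emph{not} make their truth values independent under $\mu_{T,D}$: the factors of the MLN can couple disjoint atoms through shared third atoms. In the very first application of the lemma, the MLN $P\rightarrow R(x):w$, the atoms $R(a_1)$ and $R(a_2)$ for distinct ``fresh'' $a_1,a_2$ are disjoint yet correlated, since each occurs in a factor together with the common proposition $P$. So neither the packing argument nor the fresh-elements device produces genuinely independent Bernoulli trials, and your final appeal to the law of large numbers is unjustified. (Your intermediate bound $k^{\ell}$ is also not supported by the hypothesis, which speaks only about the atom $R$ and not about all literals of $\varphi$; in the intended applications $\varphi$ is itself an $R$-atom, so no such detour is needed.)

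The paper sidesteps independence entirely by using the hypothesis as a \emph{conditional} bound. By Proposition~\ref{prop:sigmoid}, $\mathrm{sigmoid}(\delta_{R(\vec a)}^{T,D}(\mathfrak X))$ is precisely the conditional probability of $R(\vec a)$ given the values of all other ground atoms in $\mathfrak X$; the assumption that this is at least $k$ for \emph{every} $\mathfrak X$ therefore gives $\mathcal P(Y_i=1\mid Y_1,\dots,Y_{i-1})\geq k$ for any outcomes of the earlier $Y_j$. One can then couple $(Y_i)$ with an i.i.d.\ Bernoulli$(k)$ sequence $(X_i)$ so that $Y_i\geq X_i$ almost surely, and apply the law of large numbers to $(X_i)$: for $m$ large enough, $\sum_{i\le m}X_i\geq N$ with probability at least $1-\varepsilon$, hence the same for $\sum_{i\le m}Y_i$. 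This stochastic-domination argument is what replaces your independence step.
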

\begin{proof}
In any sufficiently large domain, consider the events $Y_{i}\coloneqq\varphi(\vec{a}_{i})$
for $\vec{a}_{i}\in D$ as Boolean random variables. Then by $\mathrm{sigmoid}(\delta_{R(\vec{a})}^{T,D})\geq k$,
for each $i\in\mathbb{N}$, the conditional probability of success
of $Y_{i}$ given any sequence of outcomes of the $Y_{j}$ for $j<i$
is at least $k$. Consider the sequence of random variables $(X_{i})_{i\in\mathbb{N}}$
where $X_{i}$ is a Bernoulli trial with success probability $k$.
Since the conditional probability of success of $Y_{i}$ given any
sequence of outcomes of the $Y_{j}$ for $j<i$ is at least $k$,
we can consider $\left(Y_{i}\right)_{i\in\mathbb{N}}$ to be stochastically
dominated by $\left(X_{i}\right)_{i\in\mathbb{N}}.$ However, by the
law of large numbers, $\frac{1}{n}{\sum}_{i = 1 \dots n} X_{i})$
converges to $k$ almost surely and therefore, in particular the sequence
$X_{1},\ldots,X_{N}$ has more than $n$ successes with probability
above $1-\varepsilon$ when the $N$ is sufficiently large. A fortiori
this also holds for $\left(Y_{i}\right)_{i\in\mathbb{N}}$.
\end{proof}
We now turn to investigating the asymptotic behaviour of several MLNs and DA-MLNs. 

\subsection{The Formula $P\rightarrow R(x)$\label{subsec:Example1} }

The formula $P\rightarrow R(x)$ is the converse of $R(x)\rightarrow P$,
possibly the most studied formula in discussions on varying domain
sizes and asymptotic probability. Poole et al. (2014) give a detailed
analysis of the behaviour of $P$ in this configuration, but we will
focus here on the asymptotic behaviour of $R(x)$. 
\begin{proposition}\label{prop:MLN-Ex-1}
  Let $T_{w}$ be the MLN given by the formula
$P\rightarrow R(x):w$, $w>0$, $|D_{x}|=n$. Then the asymptotic
probability of $P$ is 0 and the asymptotic probability of $R(x)$
is $\frac{1}{2}$, independent of the value of $w$.
\end{proposition}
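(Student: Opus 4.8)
The plan is to route everything through the single quantity $\mathcal{P}_{T_w,n}(P)$ and show that it tends to $0$; the value $\tfrac12$ for $R(x)$ then falls out of the sigmoid representation of the Corollary after Proposition~\ref{prop:sigmoid}, and the parameter $w$ disappears along the way. Concretely, I would first compute $\delta_{R(a)}^{T_w,n}(\mathfrak{X})$ for an arbitrary element $a$. Passing from $\mathfrak{X}_{\neg R(a)}$ to $\mathfrak{X}_{R(a)}$ changes the truth value of exactly one grounding of the single formula, namely $P\rightarrow R(a)$: it is true in $\mathfrak{X}_{R(a)}$, and in $\mathfrak{X}_{\neg R(a)}$ it is true precisely when $P$ fails. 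Hence $\delta_{R(a)}^{T_w,n}(\mathfrak{X}) = w\cdot 1_{P}(\mathfrak{X})$, so $\mathrm{sigmoid}(\delta_{R(a)}^{T_w,n}) = \mathrm{sigmoid}(w)\,1_{P} + \tfrac12\,1_{\neg P}$. Integrating against $\mu_{T_w,n}$ and applying the Corollary gives $\mathcal{P}_{T_w,n}(R(a)) = \mathrm{sigmoid}(w)\,\mathcal{P}_{T_w,n}(P) + \tfrac12(1 - \mathcal{P}_{T_w,n}(P))$, so once $\mathcal{P}_{T_w,n}(P)\to 0$ is known the $R(x)$-claim follows and the independence from $w$ is manifest.

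For $\mathcal{P}_{T_w,n}(P)$ I would compute $\delta_{P}^{T_w,n}(\mathfrak{X})$ in the same style: turning $P$ off makes every grounding $P\rightarrow R(a)$ that currently fails become true, i.e.\ every $a$ with $\neg R(a)$, so $\delta_{P}^{T_w,n}(\mathfrak{X}) = -w\,|\neg R(x)|_{\mathfrak{X}}$ and, by the Corollary again, $\mathcal{P}_{T_w,n}(P) = \int \mathrm{sigmoid}(-w\,|\neg R(x)|)\,d\mu_{T_w,n}$. The crux is that $\neg R(x)$ has many true groundings with high probability: since $\delta_{R(a)}^{T_w,n}\le w$ on \emph{every} structure and for every $a$, the conditional probability of $\neg R(a)$ given any configuration of the remaining atoms equals $\mathrm{sigmoid}(-\delta_{R(a)}^{T_w,n})\ge k := \mathrm{sigmoid}(-w)\in(0,\tfrac12)$, and the argument of Lemma~\ref{lem:Weak-law}, applied to the literal $\neg R(x)$, gives that for every fixed $N$ the probability of $|\neg R(x)| < N$ tends to $0$ as $n\to\infty$. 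Splitting the integral at $\{|\neg R(x)| < N\}$, bounding $\mathrm{sigmoid}$ by $\tfrac12$ there and by $\mathrm{sigmoid}(-wN)$ on the complement, then letting $n\to\infty$ with $N$ fixed and finally $N\to\infty$, yields $\mathcal{P}_{T_w,n}(P)\to 0$.

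The only genuinely delicate point is the uniform conditional lower bound feeding Lemma~\ref{lem:Weak-law}: I must check that $\mathrm{sigmoid}(-\delta_{R(a)}^{T_w,n})\ge k$ holds simultaneously for all configurations of the remaining atoms and all $a$, so that the events $\neg R(a_1),\neg R(a_2),\dots$ are genuinely stochastically dominated by i.i.d.\ Bernoulli$(k)$ trials regardless of the order in which they are revealed; this uniform bound is exactly where $w>0$ is used, and everything else is bookkeeping. (As a sanity check one may instead evaluate the partition function directly as $(2e^{w})^{n} + (1+e^{w})^{n}$, so that $\mathcal{P}_{T_w,n}(P) = (1 + (2e^{w})^{n}/(1+e^{w})^{n})^{-1}$, which tends to $0$ since $\tfrac{1+e^{w}}{2e^{w}}<1$ for $w>0$.)
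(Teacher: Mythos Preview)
Your proposal is correct and follows essentially the same route as the paper: compute $\delta_{R(a)}=w\cdot 1_P$ and $\delta_P=-w\,|\neg R(x)|$, invoke Lemma~\ref{lem:Weak-law} for the literal $\neg R(x)$ via the uniform pointwise bound $\mathrm{sigmoid}(\delta_{R(a)})\le \mathrm{sigmoid}(w)$, and then pass to the limit. Your write-up is in fact tidier than the paper's in two respects---you make the exact identity $\mathcal{P}(R(a))=\mathrm{sigmoid}(w)\,\mathcal{P}(P)+\tfrac12(1-\mathcal{P}(P))$ explicit rather than arguing only by inequalities, and you add the closed-form partition-function check, which the paper omits.
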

\begin{proof}
We will use our results from the preceding subsection. We observe that
\begin{align*}
\mathcal{P}_{T_{w},n}(R(x))={\int} \mathrm{sigmoid}(\delta_{R(x)}^{T_{w},n}) d{\mu_{T_{w},n}} & = {\int}\mathrm{sigmoid}(w\cdot1_{P}) d{\mu_{T_{w},n}} \leq {\int}\mathrm{sigmoid}(w) d{\mu_{T_{w},n}} =\mathrm{sigmoid}(w).
\end{align*}
Thus, Lemma \ref{lem:Weak-law} holds and for all $N\in\mathbb{N}$
there is an $n\in\mathbb{N}$ such that $\mathcal{P}(|\neg R(\mathfrak{X})|>N)\geq1-\varepsilon$.
Therefore we see that, for sufficiently large $n$,
\[
{\int}\mathrm{sigmoid}(-w\cdot|\neg R(x)|_{\mathfrak{X}}) d{\mu_{T_{w},n}} \leq (1-\varepsilon)\mathrm{sigmoid}(-w\cdot N)+\varepsilon,
\]
which converges to $0$ as $n$ grows to infinity and $\varepsilon$
approaches $0$. Thus,
\[
\underset{n\rightarrow\infty}{\lim}\mathcal{P}_{T_{w},n}(P)=\underset{n\rightarrow\infty}{\lim}{\int}\mathrm{sigmoid}(\delta_{P}^{T_{w},n}) d{\mu_{T_{w},n}} =\underset{n\rightarrow\infty}{\lim}{\int}\mathrm{sigmoid}(-w\cdot|\neg R(x)|_{\mathfrak{X}}) d{\mu_{T_{w},n}} = 0.
\]
This proves the first part of the proposition. We will now use this
to prove the second part. Since the asymptotic probability of $P$
is 0, there is for every $\varepsilon>0$ an $N\in\mathbb{N}$ such
that $\mathcal{P}_{T_{w},n}(P)<\varepsilon$. Thus,
\[
\underset{n\rightarrow\infty}{\lim}\mathcal{P}_{T_{w},n}(R(x)) = \underset{n\rightarrow\infty}{\lim}{\int}\mathrm{sigmoid}(\delta_{R(x)}^{T_{w},n}) d{\mu_{T_{w},n}} = \underset{n\rightarrow\infty}{\lim}{\int}\mathrm{sigmoid}(w\cdot1_{P})=\mathrm{sigmoid}(0) d{\mu_{T_{w},n}} = \frac{1}{2}.
\]
\end{proof}

As $P\rightarrow R(x)$ is logically equivalent to $\neg R(x)\rightarrow\neg P$,
we obtain analogous results for $R(x)\rightarrow P$:
\begin{corollary}
Let $T_{w}$ be the MLN given by the formula $R(x)\rightarrow P:w$,
$w>0$, $|D_{x}|=|D_{y}|=n$. Then the asymptotic probability of $R(x)$
is $\frac{1}{2}$ and the asymptotic probability of $P$ is 1, independent
of the value of $w$. 
\end{corollary}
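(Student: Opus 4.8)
The plan is to obtain this corollary as a formal reduction to Proposition~\ref{prop:MLN-Ex-1}, using that $R(x)\rightarrow P$ is logically equivalent to the contrapositive $\neg P\rightarrow\neg R(x)$. Let $T_w'$ be the MLN $P'\rightarrow R'(x):w$ of Proposition~\ref{prop:MLN-Ex-1}, over a signature with a proposition $P'$ and a unary relation $R'$. Given any $\mathcal{L}$-structure $\mathfrak{X}$ on a domain $D$ with $|D|=n$, I would define $\mathfrak{X}'$ on the same domain by declaring $P'$ to hold in $\mathfrak{X}'$ exactly when $P$ fails in $\mathfrak{X}$, and $R'(a)$ to hold in $\mathfrak{X}'$ exactly when $R(a)$ fails in $\mathfrak{X}$, for each $a\in D$.

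First I would verify that $\mathfrak{X}\mapsto\mathfrak{X}'$ is a bijection between the structures of domain size $n$ for the two signatures and that it preserves weights. The point is that, under the relabelling, the grounding $R(a)\rightarrow P$ holds in $\mathfrak{X}$ if and only if $P'\rightarrow R'(a)$ holds in $\mathfrak{X}'$ (both say ``$P$, or not $R(a)$''), so the two MLNs have the same number of true groundings of their single weighted formula, and hence $w_{T_w}(\mathfrak{X})=w_{T_w'}(\mathfrak{X}')$. It follows that the normalisation constants coincide and $\mu_{T_w,n}(\mathfrak{X})=\mu_{T_w',n}(\mathfrak{X}')$ for every $\mathfrak{X}$.

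Next I would push the relevant events through the bijection: the set of structures in which $P$ fails is carried onto the set in which $P'$ holds, and, for a fixed $a\in D$, the set of structures in which $R(a)$ fails is carried onto the set in which $R'(a)$ holds. Since the two measures agree under the bijection, this gives $\mathcal{P}_{T_w,n}(\neg P)=\mathcal{P}_{T_w',n}(P')$ and $\mathcal{P}_{T_w,n}(\neg R(x))=\mathcal{P}_{T_w',n}(R'(x))$ for all $n$. Proposition~\ref{prop:MLN-Ex-1} then tells us that the right-hand sides converge to $0$ and $\tfrac{1}{2}$ respectively, so $\lim_{n\rightarrow\infty}\mathcal{P}_{T_w,n}(P)=1$ and $\lim_{n\rightarrow\infty}\mathcal{P}_{T_w,n}(R(x))=\tfrac{1}{2}$, which is the claim.

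The only step needing genuine care is the weight-preservation check --- i.e.\ confirming that complementing the interpretations of $P$ and $R$ turns $R(x)\rightarrow P$ into its contrapositive and hence leaves the count of satisfied groundings unchanged --- but this is a routine unwinding of the truth definitions, so I do not anticipate a real obstacle. All the analytic content (the appeal to Lemma~\ref{lem:Weak-law} and the law of large numbers) is already packaged inside Proposition~\ref{prop:MLN-Ex-1}, and the present argument is purely a transfer along that duality.
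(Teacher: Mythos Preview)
Your proposal is correct and follows essentially the same approach as the paper: the paper derives the corollary in one line by noting the logical equivalence $P\rightarrow R(x)\equiv\neg R(x)\rightarrow\neg P$ and appealing to Proposition~\ref{prop:MLN-Ex-1}, while you spell out the same contrapositive/complementation duality explicitly as a weight-preserving bijection between structures. Your version is simply a more detailed unpacking of the paper's terse justification.
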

Of course, it is well known that MLNs do not generally behave well
asymptotically. Usually, however, this phenomenon has been observed
in atoms that have an unbounded number of connections themselves.
Here, the issue stems from the fact that the atom with which it is
connected has an unbounded number of connections and therefore degenerates. 

Since moving to DA-MLNs will regulate the asymptotic behaviour of $P$,
one might expect that this will also make the probability of $P$
weight-dependent. However, unfortunately, scaling the weights with
the domain size will itself impact $R(x)$ in the same way:
\begin{proposition}
Let $T_{w}$ be the DA-MLN given by the formula $P\rightarrow R(x):w$,
$w>0$, $|D_{x}|=n$. Then the asymptotic probability of $R(x)$ is
$\frac{1}{2}$, independent of the value of $w$.
\end{proposition}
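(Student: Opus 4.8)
The plan is to follow the skeleton of the proof of Proposition \ref{prop:MLN-Ex-1}, but to exploit the fact that here the domain-size scaling does most of the work: it collapses the relevant $\delta$-term directly, so we will need neither Lemma \ref{lem:Weak-law} nor the asymptotic vanishing of $P$. First I would pin down the scaling factor $C_D$ attached to the single weighted formula $\varphi\coloneqq P\rightarrow R(x)$. Its literals are $P$ and $R(x)$; since $x$ is the only free variable of $\varphi$, we have $V_P=\{x\}$, contributing $|D_x|=n$, while $V_{R(x)}=\emptyset$, contributing the empty product $1$. Hence the connection vector is $\{n,1\}$ and $C_D=\max(n,1)=n$ for $n\geq 1$, so in the DA-MLN $T_w$ the formula effectively carries weight $w/n$.

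Next I would compute $\delta_{R(\vec a)}^{T_w,n}(\mathfrak{X})$ for an arbitrary structure $\mathfrak{X}$ on a domain of size $n$ and an arbitrary element $a$ of that domain. Flipping the truth value of $R(a)$ leaves every grounding $P\rightarrow R(b)$ with $b\neq a$ untouched and affects only the grounding $P\rightarrow R(a)$: this grounding holds in $\mathfrak{X}_{R(\vec a)}$, and in $\mathfrak{X}_{\neg R(\vec a)}$ it holds exactly when $P$ is false. Therefore $n_\varphi(\mathfrak{X}_{R(\vec a)})-n_\varphi(\mathfrak{X}_{\neg R(\vec a)})=1_P$, and multiplying by the scaled weight gives $\delta_{R(\vec a)}^{T_w,n}(\mathfrak{X})=\frac{w}{n}\,1_P$.

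Finally, plugging this into the integral characterisation established above (the corollary to Proposition \ref{prop:sigmoid}) yields
\[
\mathcal{P}_{T_w,n}(R(x))=\underset{\mu_{T_w,n}}{\int}\mathrm{sigmoid}\!\left(\tfrac{w}{n}\,1_P\right).
\]
Since $\mathrm{sigmoid}$ is increasing with $\mathrm{sigmoid}(0)=\tfrac12$, the integrand satisfies $\tfrac12\leq\mathrm{sigmoid}(\tfrac{w}{n}\,1_P)\leq\mathrm{sigmoid}(\tfrac{w}{n})$ pointwise, uniformly in $\mathfrak{X}$, and hence $\tfrac12\leq\mathcal{P}_{T_w,n}(R(x))\leq\mathrm{sigmoid}(\tfrac{w}{n})$. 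Letting $n\rightarrow\infty$ squeezes the probability to $\mathrm{sigmoid}(0)=\tfrac12$, independently of $w$.

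There is no substantial obstacle here: the only care required is the bookkeeping of the connection vector — in particular the asymmetry that the literal $R(x)$ contributes $1$ while $P$ contributes $n$, which is precisely what Mittal et al.'s maximum washes out — together with the observation that the scaled weight $w/n$ tends to $0$. The point worth flagging is conceptual rather than technical: the very rescaling introduced to tame the degenerate behaviour of $P$ simultaneously annihilates the influence of $P$ on $R(x)$, so the asymptotic probability of $R(x)$ remains parameter-independent.
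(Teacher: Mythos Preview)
Your proof is correct and follows essentially the same approach as the paper: compute the connection vector to obtain the scaling factor $n$, identify $\delta_{R(x)}^{T_w,n}=\tfrac{w}{n}\,1_P$, and bound the integral by $\mathrm{sigmoid}(w/n)\to\tfrac12$. If anything, your version is slightly more explicit, since you spell out the lower bound via a squeeze whereas the paper records only the upper-bound chain and leaves the matching lower bound $\mathrm{sigmoid}(\tfrac{w}{n}\,1_P)\geq\tfrac12$ implicit.
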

\begin{proof}
The connection vector for the formula $P\rightarrow R(x)$ is $(n,1)$,
and so weights will be scaled using the factor $n$.
\begin{align*}
\underset{n\rightarrow\infty}{\lim}\mathcal{P}_{T_{w},n}(R(x)) & =\underset{n\rightarrow\infty}{\lim}{\int}\mathrm{sigmoid}(\delta_{R(x)}^{T_{w},n}) d{\mu_{T_{w},n}} =\underset{n\rightarrow\infty}{\lim}{\int}\mathrm{sigmoid}(\frac{w}{n}\cdot1_{P}) d{\mu_{T_{w},n}}\\
 & \leq\underset{n\rightarrow\infty}{\lim}\mathrm{sigmoid}(\frac{w}{n})=\mathrm{sigmoid}(0)=\frac{1}{2}
\end{align*}
\end{proof}

Again, we obtain a corollary on $R(x)\rightarrow P:w$.
\begin{proposition}
Let $T_{w}$ be the DA-MLN given by the formula $R(x)\rightarrow P:w$,
$w>0$, $|D_{x}|=n$. Then the asymptotic probability of $R(x)$ is
$\frac{1}{2}$, independent of the value of $w$.
\end{proposition}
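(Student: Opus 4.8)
\section*{Proof proposal}

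The plan is to mirror the argument of the preceding proposition almost verbatim, the only change being a sign in $\delta$. First I would record the connection vector of $R(x)\rightarrow P$. Its two literals are $R(x)$ and $P$; the variable $x$ occurs in $R(x)$ but not in $P$, so $V_{R(x)}=\emptyset$ and $V_{P}=\{x\}$, giving connection vector $\{1,n\}$ and hence scaling factor $C_{D}=n$. Thus, read as a DA-MLN, the single weighted formula contributes weight $\tfrac{w}{n}$ per true grounding, and $\delta$ is to be read with $w$ replaced by $\tfrac{w}{n}$, exactly as in the previous proof.

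Next I would compute $\delta_{R(x)}^{T_{w},n}$. Flipping the truth value of $R(\vec{a})$ changes the truth value of exactly one grounding of $R(x)\rightarrow P$, namely $R(\vec{a})\rightarrow P$: in $\mathfrak{X}_{R(\vec{a})}$ this grounding contributes $1_{P}$, while in $\mathfrak{X}_{\neg R(\vec{a})}$ it is vacuously true and contributes $1$. All other groundings are unaffected, so
\[
\delta_{R(x)}^{T_{w},n}(\mathfrak{X})=\tfrac{w}{n}\bigl(1_{P}-1\bigr)=-\tfrac{w}{n}\,1_{\neg P}.
\]
By the corollary to Proposition~\ref{prop:sigmoid} (which applies to DA-MLN as in the previous proof),
\[
\mathcal{P}_{T_{w},n}(R(x))=\underset{\mu_{T_{w},n}}{\int}\mathrm{sigmoid}\bigl(-\tfrac{w}{n}\,1_{\neg P}\bigr).
\]

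Finally I would squeeze. Since $-\tfrac{w}{n}\le-\tfrac{w}{n}1_{\neg P}\le0$ on every structure and $\mathrm{sigmoid}$ is increasing, the integrand lies between $\mathrm{sigmoid}(-w/n)$ and $\mathrm{sigmoid}(0)=\tfrac12$, whence
\[
\mathrm{sigmoid}(-w/n)\ \le\ \mathcal{P}_{T_{w},n}(R(x))\ \le\ \tfrac12 .
\]
As $n\rightarrow\infty$ the left-hand side tends to $\mathrm{sigmoid}(0)=\tfrac12$, so $\lim_{n\rightarrow\infty}\mathcal{P}_{T_{w},n}(R(x))=\tfrac12$, independent of $w$.

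There is no genuine obstacle here; the only points that need care are getting the sign of $\delta$ right --- the implication $R(\vec a)\rightarrow P$ is vacuously satisfied when $R(\vec a)$ is false, so flipping $R(\vec a)$ to false can only increase the grounding count, which is why $\delta\le0$ --- and remembering that the $\delta$ of Proposition~\ref{prop:sigmoid} is the DA-MLN version with $w$ rescaled to $w/n$. Alternatively one could invoke the logical equivalence $R(x)\rightarrow P\equiv\neg P\rightarrow\neg R(x)$ and appeal to the $P\rightarrow R(x)$ proposition, but the direct computation above is shorter.
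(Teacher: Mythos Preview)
Your direct computation is correct: the connection vector, the sign of $\delta_{R(a)}^{T_w,n}=-\tfrac{w}{n}1_{\neg P}$, and the squeeze between $\mathrm{sigmoid}(-w/n)$ and $\tfrac12$ are all right.

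The paper takes the other route you mention at the end: it does not give a separate proof but records the result as a corollary of the preceding proposition on $P\rightarrow R(x)$, using the logical equivalence $R(x)\rightarrow P\equiv\neg P\rightarrow\neg R(x)$ (the same trick it used in the MLN case just above). Your direct argument has the advantage of being self-contained and of making the mechanism --- the scaling factor $n$ crushing the at-most-one-unit change in the grounding count --- fully explicit, whereas the paper's route is shorter but leaves the reader to check that the symmetry under swapping $P\leftrightarrow\neg P$, $R\leftrightarrow\neg R$ really does transfer the DA-MLN result verbatim.
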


We see that in fact, neither semantics displays adequate, weight-dependent scaling behaviour for this weighted formula. 

\subsection{The Formula $P\wedge Q(x)\wedge R(x,y)$\label{subsec:Example2}}

In this subsection we will discuss the formula $P\wedge Q(x)\wedge R(x,y)$
as an example in which DA-MLNs overcompensate for the amount
of connections of one literal, namely $Q(x)$.

In ordinary MLNs, the asymptotic probability of $Q(x)$ is 1, regardless
of the weight. 
\begin{proposition}
Let $T_{w}$ be the MLN given by the formula $P\wedge Q(x)\wedge R(x,y):w$,
$w>0$, $|D_{x}|=|D_{y}|=n$. Then the asymptotic probability of $Q(x)$
is 1, independent of the value of $w$.
\end{proposition}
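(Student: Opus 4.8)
The plan is to proceed exactly as in the proof of Proposition~\ref{prop:MLN-Ex-1}: rewrite $\mathcal{P}_{T_w,n}(Q(a))$ for a fixed domain element $a$ as an integral of $\mathrm{sigmoid}(\delta_{Q(a)}^{T_w,n})$ over $\mu_{T_w,n}$ by the corollary to Proposition~\ref{prop:sigmoid}, and then show that the integrand is close to $1$ with high $\mu_{T_w,n}$-probability. Writing $m(\mathfrak{X})\coloneqq\left|Q(x)\wedge R(x,y)\right|_{\mathfrak{X}}$, counting the true groundings of $P\wedge Q(x)\wedge R(x,y)$ gives $n_{1}(\mathfrak{X})=1_{P}\cdot m(\mathfrak{X})$, and from this one reads off the three $\delta$-functions that will be needed:
\[
\delta_{Q(a)}^{T_w,n}(\mathfrak{X})=w\cdot 1_{P}\cdot\left|R(a,y)\right|_{\mathfrak{X}},\qquad
\delta_{R(a,b)}^{T_w,n}(\mathfrak{X})=w\cdot 1_{P}\cdot 1_{Q(a)},\qquad
\delta_{P}^{T_w,n}(\mathfrak{X})=w\cdot m(\mathfrak{X}).
\]
Because $w>0$ all three are nonnegative, so by Proposition~\ref{prop:sigmoid} each of $Q(a)$, $R(a,b)$ and $P$ has conditional probability at least $\mathrm{sigmoid}(0)=\frac{1}{2}$ given any configuration of the remaining atoms; this is precisely the situation in which Lemma~\ref{lem:Weak-law} applies.

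The argument then has three steps. First, Lemma~\ref{lem:Weak-law} applied to the atom $R(a,y)$ (free variable $y$, bound $k=\frac{1}{2}$) gives $\mu_{T_w,n}(\left|R(a,y)\right|<N)\to 0$ for every fixed $N$, whence $\int\mathrm{sigmoid}(w\left|R(a,y)\right|)\,d\mu_{T_w,n}\geq\mathrm{sigmoid}(wN)\bigl(1-\mu_{T_w,n}(\left|R(a,y)\right|<N)\bigr)$, which tends to $1$ on letting first $n\to\infty$ and then $N\to\infty$. Second, I would show $\mu_{T_w,n}(\neg P)\to 0$: from $1-\mathrm{sigmoid}(wm)=\mathrm{sigmoid}(-wm)\leq e^{-wm}$ and $m\geq 0$, the corollary gives $\mu_{T_w,n}(\neg P)=\int\mathrm{sigmoid}(-w\,m)\,d\mu_{T_w,n}\leq e^{-wN}+\mu_{T_w,n}(m<N)$ for every $N$, so it suffices to show $\mu_{T_w,n}(m<N)\to 0$ for fixed $N$; I return to this below. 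Third, combining the two, since $\mathrm{sigmoid}(w\cdot 1_{P}\cdot\left|R(a,y)\right|)\geq 1_{P}\cdot\mathrm{sigmoid}(w\left|R(a,y)\right|)\geq\mathrm{sigmoid}(w\left|R(a,y)\right|)-1_{\neg P}$, the corollary yields $\mathcal{P}_{T_w,n}(Q(a))=\int\mathrm{sigmoid}(\delta_{Q(a)}^{T_w,n})\,d\mu_{T_w,n}\geq\int\mathrm{sigmoid}(w\left|R(a,y)\right|)\,d\mu_{T_w,n}-\mu_{T_w,n}(\neg P)$, which converges to $1$ by the first two steps; since this probability is also at most $1$, the asymptotic probability of $Q(x)$ is $1$.

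It remains to prove $\mu_{T_w,n}(m<N)\to 0$ for each fixed $N$, which is the crux of the argument. The obstacle is that the marginal probability of $P$ and the distribution of the count $m=\left|Q(x)\wedge R(x,y)\right|$ are entangled — the weight of $P$ is exactly $w\,m$, while the typical size of $m$ already reflects the model's pull towards $P$ and towards large extensions of $Q$ and $R$ — so one cannot simply read off $\mu(\neg P)\to 0$, and instead one has to establish that $m$ is large with overwhelming probability irrespective of $P$. I would do this via the inclusion $\{m<N\}\subseteq\{\left|Q(x)\right|<N\}\cup\{\exists c\in D:\left|R(c,y)\right|=0\}$: if at least $N$ elements satisfy $Q$ but $m<N$, then the $R$-rows of $N$ such $Q$-elements have total size less than $N$, so by the pigeonhole principle one of those rows is empty. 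The first event has probability tending to $0$ by Lemma~\ref{lem:Weak-law} applied to the atom $Q(x)$; the second has probability at most $\sum_{c\in D}\mu_{T_w,n}(\left|R(c,y)\right|=0)\leq n\cdot 2^{-n}$, since multiplying the conditional bounds $1-\mathrm{sigmoid}(\delta_{R(c,b)}^{T_w,n})\leq\frac{1}{2}$ over the $n$ atoms $R(c,b_{1}),\dots,R(c,b_{n})$ — exactly as in the proof of Lemma~\ref{lem:Weak-law} — gives $\mu_{T_w,n}(\left|R(c,y)\right|=0)\leq 2^{-n}$. The pigeonhole step is what keeps this elementary: it reduces the estimate to the probability of a \emph{single} empty $R$-row, where the crude bound $2^{-n}$ is enough and no full binomial tail bound is required.
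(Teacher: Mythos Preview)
Your argument is correct and takes a genuinely different route from the paper's. The paper bounds $\mathcal{P}_{T_w,n}(P)\geq\tfrac{1}{2}$ and $\mathcal{P}_{T_w,n}(R(x,y)\mid P)\geq\tfrac{1}{2}$ to obtain $\mathcal{P}_{T_w,n}(P\wedge R(x,y))\geq\tfrac{1}{4}$, and then invokes Lemma~\ref{lem:Weak-law} directly on the compound formula $P\wedge R(a,y)$ (free variable $y$) to conclude that $\left|P\wedge R(a,y)\right|\to\infty$ in probability and hence $\int\mathrm{sigmoid}(\delta_{Q(a)}^{T_w,n})\,d\mu_{T_w,n}\to 1$. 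You instead separate the two ingredients: first $\left|R(a,y)\right|\to\infty$ in probability via Lemma~\ref{lem:Weak-law} applied to the atom $R(a,y)$, and then $\mu_{T_w,n}(\neg P)\to 0$ via the count $m=\left|Q(x)\wedge R(x,y)\right|$ and the pigeonhole inclusion $\{m<N\}\subseteq\{\left|Q\right|<N\}\cup\{\exists c:\left|R(c,y)\right|=0\}$ together with the explicit bound $\mu(\left|R(c,y)\right|=0)\leq 2^{-n}$.

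The paper's route is more concise; yours is more explicit, and has the merit of invoking Lemma~\ref{lem:Weak-law} only on atoms, where its hypothesis is literally stated. Applying the lemma to $P\wedge R(a,y)$ from the marginal bound $\tfrac{1}{4}$ alone is somewhat informal: the lemma's stochastic-domination proof needs the \emph{conditional} probability of $\varphi(y_i)$ given the outcomes of $\varphi(y_1),\ldots,\varphi(y_{i-1})$ to be bounded below, and on the event that all previous $\varphi(y_j)$ fail this conditional is governed by $\mathcal{P}(P\mid\bigwedge_{j<i}\neg\varphi(y_j))$, which a marginal bound on $\mathcal{P}(P\wedge R(x,y))$ does not control. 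Making that step rigorous requires precisely the fact $\mathcal{P}(\neg P)\to 0$ that you go on to prove, so your decomposition can be read as supplying the missing justification for the paper's shortcut.
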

\begin{proof}
First observe that on all structures, $\delta_{P}^{T_{w},n}$ will
be at least $\frac{1}{2}$:
\[
\mathrm{sigmoid}(\delta_{P}^{T_{w},n})=\mathrm{sigmoid}(w\cdot|Q(x)\wedge R(x,y)|)\geq\mathrm{sigmoid}(0)=\frac{1}{2}
\]
We will now establish that, for any fixed $x$, the probability of
$P\wedge R(x,y)$ is always at least $\frac{1}{4}$. This is a consequence
of the Bayesian formula, which implies that $P\wedge R(x,y)=\mathcal{P}_{T_{w},n}(P){\cdot}\mathcal{P}_{T_{w},n}(R(x,y)|P)$.
Since the analysis of Subsection \ref{subsec:Probability-kernels}
is just as valid for conditional probabilities, we can derive
\begin{align*}
\mathcal{P}_{T_{w},n}(R(x,y)|P) & =\frac{{\int}\mathrm{sigmoid}(\delta_{R(x,y)}^{T_{w},n})\cdot1_{P} d{\mu_{T_{w},n}}}{\mathcal{P}_{T_{w},n}(P)}=\frac{{\int}\mathrm{sigmoid}(w\cdot1_{Q(x)})\cdot1_{P} d{\mu_{T_{w},n}}}{\mathcal{P}_{T_{w},n}(P)}\geq\\
\geq & \frac{{\int}\mathrm{sigmoid}(0)\cdot1_{P} d{\mu_{T_{w},n}}}{\mathcal{P}_{T_{w},n}(P)}=\frac{\frac{1}{2}\mathcal{P}_{T_{w},n}(P)}{\mathcal{P}_{T_{w},n}(P)}=\frac{1}{2}
\end{align*}
and therefore $\mathcal{P}_{T_{w},n}(P\wedge R(x,y))\geq\frac{1}{2}\cdot\frac{1}{2}=\frac{1}{4}$.
Thus we can once again apply Lemma \ref{lem:Weak-law} and conclude
that, asymptotically, there will be arbitrarily many pairs $(x,y)$
with $P\wedge R(x,y)$ with arbitrarily high probability. As in the
proof of Proposition \ref{prop:MLN-Ex-1}, 
\[
\underset{n\rightarrow\infty}{\lim}\mathcal{P}_{T_{w},n}(Q(x))=\underset{n\rightarrow\infty}{\lim}{\int}\mathrm{sigmoid}(\delta_{Q(x)}^{T_{w},n}) d{\mu_{T_{w},n}}=\underset{n\rightarrow\infty}{\lim}{\int}\mathrm{sigmoid}(w\cdot|P\wedge R(x,\mathfrak{X})|) d{\mu_{T_{w},n}} = 1
\]
\end{proof}

In DA-MLNs, the situation is reversed, and in fact, the probability
of $Q(x)$ will always tend to $\frac{1}{2}$:
\begin{proposition}
Let $T_{w}$ be the DA-MLN given by the formula $P\wedge Q(x)\wedge R(x,y):w$,
$w>0$, $|D_{x}|=|D_{y}|=n$. Then the asymptotic probability of $Q(x)$
is $\frac{1}{2}$, independent of the value of $w$.
\end{proposition}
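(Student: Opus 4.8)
The plan is to mimic the structure of the corresponding MLN proof, but track how the domain-size scaling factor changes the estimate. First I would compute the connection vector of $P\wedge Q(x)\wedge R(x,y)$: the literal $P$ connects to $|D_x|\cdot|D_y| = n^2$ tuples, the literal $Q(x)$ connects to $|D_y| = n$ tuples, and the literal $R(x,y)$ connects to no further variables, so it contributes $1$. Hence the connection vector is $\{n^2, n, 1\}$ and the scaling constant is $C_{D} = n^2$. The weight used in the semantics is therefore $w/n^2$, and one computes
\[
\delta_{Q(x)}^{T_w,n}(\mathfrak{X}) = \frac{w}{n^2}\cdot|P\wedge R(x,y)|_{\mathfrak{X}},
\]
where the number of true groundings of $R(x,y)$ with the fixed first argument $x$ is at most $n$ (there are only $n$ choices of $y$), and $P$ is a single proposition contributing either $0$ or $1$. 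So $\delta_{Q(x)}^{T_w,n} \le \frac{w}{n^2}\cdot n = \frac{w}{n}$ on every structure.

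Next I would feed this bound into the integral characterisation from the Corollary after Proposition \ref{prop:sigmoid}:
\[
\mathcal{P}_{T_w,n}(Q(x)) = \underset{\mu_{T_w,n}}{\int}\mathrm{sigmoid}(\delta_{Q(x)}^{T_w,n}) \le \underset{\mu_{T_w,n}}{\int}\mathrm{sigmoid}\!\left(\tfrac{w}{n}\right) = \mathrm{sigmoid}\!\left(\tfrac{w}{n}\right),
\]
using monotonicity of $\mathrm{sigmoid}$ and the fact that $w>0$. Since $\mathrm{sigmoid}(w/n) \to \mathrm{sigmoid}(0) = \tfrac12$ as $n\to\infty$, this gives $\limsup_{n\to\infty}\mathcal{P}_{T_w,n}(Q(x)) \le \tfrac12$. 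For the matching lower bound I would note that $\delta_{Q(x)}^{T_w,n} \ge 0$ always (the count is non-negative and $w>0$), so $\mathrm{sigmoid}(\delta_{Q(x)}^{T_w,n}) \ge \mathrm{sigmoid}(0) = \tfrac12$ pointwise, whence $\mathcal{P}_{T_w,n}(Q(x)) \ge \tfrac12$ for every $n$. Combining the two bounds, $\mathcal{P}_{T_w,n}(Q(x)) \to \tfrac12$.

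The argument here is genuinely easier than in the ordinary MLN case: there one needed Lemma \ref{lem:Weak-law} to argue that the number of pairs $(x,y)$ with $P\wedge R(x,y)$ grows without bound, pushing the probability up to $1$; here the scaling factor $n^2$ overwhelms the at-most-$n$ growth of $|P\wedge R(x,\cdot)|$, so the argument to $\mathrm{sigmoid}$ is squeezed to $0$ uniformly in the structure and no probabilistic input is needed at all. The one point to be careful about is the exact value of the scaling constant $C_{D}$: I want to confirm that the maximum entry of the connection vector is indeed $n^2$ (coming from $P$, which shares no variable with the rest) and not $n$; if for some reason the intended reading gave $C_D = n$, the bound would instead be $\mathrm{sigmoid}(1)$, which would not suffice, so this is the step where a misreading of the connection-vector definition would break the proof. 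Assuming $C_D = n^2$ as the definition dictates, the squeeze is immediate and the proposition follows.
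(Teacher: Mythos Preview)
Your proof is correct and follows essentially the same approach as the paper: compute the connection vector $(n^{2},n,1)$, scale by $n^{2}$, bound $|P\wedge R(x,\cdot)|\le n$ to get $\delta_{Q(x)}^{T_w,n}\le w/n$, and pass to the limit inside the sigmoid. Your version is in fact slightly more complete, since you make the lower bound $\delta_{Q(x)}^{T_w,n}\ge 0\Rightarrow \mathcal{P}_{T_w,n}(Q(x))\ge\tfrac12$ explicit, whereas the paper only writes out the $\le\tfrac12$ direction.
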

\begin{proof}
The connection vector for the formula $P\wedge Q(x)\wedge R(x,y)$
is $(n^{2},n,1)$, and so weights will be scaled using the factor
$n^{2}$.

\begin{align*}
\underset{n\rightarrow\infty}{\lim}\mathcal{P}_{T_{w},n}(Q(x)) & =\underset{n\rightarrow\infty}{\lim}{\int}\mathrm{sigmoid}(\delta_{Q(x)}^{T_{w},n}) d{\mu_{T_{w},n}} = \underset{n\rightarrow\infty}{\lim}{\int}\mathrm{sigmoid}(\frac{w}{n^{2}}\cdot|P\wedge R(x,\mathfrak{X})|) d{\mu_{T_{w},n}}\leq\\
 & \leq\underset{n\rightarrow\infty}{\lim}{\int}\mathrm{sigmoid}(\frac{w}{n^{2}}\cdot n) d{\mu_{T_{w},n}} = \underset{n\rightarrow\infty}{\lim}\mathrm{sigmoid}(\frac{w}{n})=\mathrm{sigmoid}(0)=\frac{1}{2}
\end{align*}
\end{proof}
Again we observe that neither formalism behaves satisfactorily when scaling to larger domains. 

\subsection{Discussion \label{subsec:AggrFunc}}

When introducing DA-MLNs, Mittal et al. (2019) use the aggregation
function $\mathrm{max}$ as a pragmatic choice with good formal properties
that also proved to work well in practice. However, they also point
out that investigating different choices could be an avenue for further
research, specifically mentioning the function $\mathrm{sum}$. Therefore,
we would like to discuss briefly to what extent the issues raised
in this section depend on the precise aggregation function used. 

It seems clear that, since the asymptotic behaviour is degenerate
in those cases in which $\delta$ limits to either $0$ or $\infty$,
the order of the scaling coefficient is more important than the precise
number. When using the maximum function as an aggregation function,
the order will be the same as the highest order among the entries
of the connection vector. This scaling will be unsuitable though for
investigating the behaviour of the other literals in the formula,
since the scaling will overcompensate for the number of connections.
This is exactly what happens in the examples discussed here. Therefore,
changing the aggregation function to summation would rather exacerbate
than mitigate the issues, since the weights would then be scaled down even
further. 

Instead, it might seem plausible to use a concept of mean. Since we
are dealing with multiplicative scaling, the arithmetic mean would
not be a natural choice, and would be of the same order as the maximum.
Instead, one might try the geometric mean as an aggregation function.
This will not regulate the asymptotic behaviour of the literals at
the extremes - in fact, if one considers the standard example of $R(x)\rightarrow P:w$,
one would overcompensate for the connections of the $R(x)$ and undercompensate
for the connections of the $P$. Therefore, the geometric mean would
be far from theoretically optimal, and in fact, since the orders of
the connection numbers of the literals are different, no single aggregation
function will be adequate for all literals. However, the convergence
to the degenerate probabilities would be slowed for all literals,
and this might be relevant in practical applications.

We will now continue along a different line, and switch our representation
formalism from MLNs and undirected models, where all literals influence
each other and connections will need to be aggregated, to RLRs and
directed models, where only a single literal is being influenced and
one can scale directly along the possible connections of that single
literal.

\section{Domain-size-Aware RLR }

We have seen in the preceding sections that while DA-MLNs enhance the
dependence of limit behaviour on the weights of the formulas, there
are cases where they cannot solve the issue with regards to all queries.
More precisely, we see that the issue comes from the aggregation function
and the need to choose a single weight for formulas whose literals
have different numbers of connections. While we have also seen that
MLNs can have weight-independent limit behaviour even when there is
only at most one connection from the literal concerned, the example
given is dependent on the other literal in the formula having infinitely
many connections and thus limiting to probability 1. 

We therefore introduce a domain-size aware version of RLRs, which make use of
directed acyclic graphs to provide directionality.
After defining our version in the coming subsection, we then move on compare
it to DA-MLNs and unscaled RLRs, discuss a natural interpretation of scaled weights as proportions
and suggest a mechanism to accommodate constants in our signature. 
\subsection{Definition of Domain-size-Aware RLR}

In this subsection, we will adapt the principle of DA-MLNs to the weighted
directed model approach of RLRs, and we will see that since there is
only one literal that is directly affected by any connection (the
child literal of the edge), we can avoid the problem of aggregating
a connection vector and can scale directly by the domain sizes of
the free variables in the corresponding variable set. Formally, we
define:
\begin{definition}
\label{def:DA-RLR-} A \emph{domain-size aware Markov Logic Network (DA-RLR) }$T$\emph{ is given by the same
syntax as an RLR (see Definition \ref{def:RLRSyntax}). However, the
semantics differs as follows from the semantics given in Definition
\ref{def:RLRSemantics}:}

Let $D$ be a multi-sorted domain reduct and let $D_{x}$ be the sort
of a variable $x$. For any set of variable symbols $V$ let $|D|_{V}\coloneqq{\prod}_{x\in V}|D_{x}|$. 

Then the induction step of Definition \ref{def:RLRSemantics} is replaced
as follows:

The probability of an $\mathcal{L}_{n+1}$-structure is given by the
probability of its $\mathcal{L}_{n}$-reduct multiplied with the probability
of the groundings of the atoms in $\mathcal{L}_{n+1}\backslash\mathcal{L}_{n}$.
These are given by 
\[
\mathcal{P}(Q(\vec{x}))=\mathrm{sigmoid}(\underset{i}{\sum}\frac{w_{i}}{|D|_{V_{i}}}\underset{\vec{a}\in D}{\sum}1_{\psi_{i}(\vec{a}/V_{i})}).
\]
\end{definition}
\begin{remark}
Any variable $y$ in $V_{i}$ which does not occur in $\psi_{i}$
incrases the unscaled weight in the numerator by a factor $\left|D_{y}\right|$,
but this is exactly counteracted by increasing the scaling factor
in the denominator by the same factor. Therefore, in the context of
DA-RLRs we can always assume that $V_{i}$ is excactly all variables
in $\psi_{i}$ that do not occur in $\varphi$. 
\end{remark}
The definition now explicitly depends on the domain sizes just as
the definition of DA-MLNs depends on domain sizes. However, the number
$|D|_{V_{i}}$ in the denominator now represents the possible domains
of connection for just the atom $Q(\vec{x})$ -- since this is the
only child of that edge relation, there is no connection vector and
no aggregation function. For this representation, we can show how
the limit probability of any formula varies with the weights chosen
as domain sizes approach infinity. We suggest that a similar argument
might apply to DA-MLNs in which every entry of the connection vector
is of the same order of magnitude; however, the algorithm we give
below for determining the asymptotic behaviour of DA-RLRs depends heavily
on an underlying DAG to terminate.

\subsection{Relationship of DA-RLRs to Unscaled RLRs and to DA-MLNs}

A main feature of weight scaling approaches such as DA-RLRs or DA-MLNs is the
transparent relationship to the underlying framework. This means that,
when restricted to a single underlying domain, there is a 1-to-1 correspondence
between DA-RLRs and RLRs, defined simply by multiplying the weights
$w_{i}$ with the factor $|D|_{V_{i}}$. This has several consequences: 

Firstly, it gives a good idea of the expressivity of domain-size aware
formalisms. On a given domain, the same classes of probability distributions
over possible worlds can be expressed by DA-RLRs and by ordinary RLRs,
and one can be converted to the other transparently.
The expressivity of RLRs has been studied by Kazemi et al. (2014b) in terms of the decision thresholds that can be expressed by an RLR.\@ 
They show that all decision thresholds polynomial in the number of tuples satisfying a given formula $\phi$  can be expressed by an RLR.\@  
\begin{proposition}\label{RLR-Expressivity}
  Let $\mathcal{L}$ be the signature  $\{Q\} \cup {\{R_i\}}_{i \in I}$ with a nullary relation $Q$ and relations ${\{R_i\}}_{i \in I}$ of positive arity. Then for any polynomial $p(\vec{v})$ with terms $v_j$, each indicating a number of (tuples of) individuals for which a Boolean formula $\phi_j(\vec{x})$ of $\{R_i\}_{i \in I}$ is true or false, there is an RLR $T$ such that for any $D$, the probability of $Q$ with respect to a domain $D$ defined by $T$ is greater than 0.5 if and only if $p > 0$.   
\end{proposition}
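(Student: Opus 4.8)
The goal is to express, for an arbitrary polynomial $p$ whose monomials are products of counts $v_j = |\phi_j(\vec x)|$ (or of complementary counts $|\neg\phi_j(\vec x)|$), an RLR on a single nullary child $Q$ sitting below all the $R_i$, so that $\mathcal P(Q) > 1/2$ exactly when $p$ evaluates positive on the given structure. Since $Q$ is nullary and all $R_i$ are its parents (the RBN is the star with centre $Q$), the semantics of Definition~\ref{def:RLRSemantics} gives $\mathcal P(Q) = \mathrm{sigmoid}\big(\sum_i w_i \sum_{\vec a \in D} 1_{\psi_i(\vec a / V_i)}\big)$, and $\mathrm{sigmoid}(k) > 1/2$ iff $k > 0$. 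So it suffices to realise the integer-valued function $p$ itself as a sum $\sum_i w_i \sum_{\vec a} 1_{\psi_i(\vec a/V_i)}$ over a finite family of triples $(\psi_i, w_i, V_i)$ with $\psi_i$ quantifier-free in the $R_i$ and $V_i$ a set of fresh variables.

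The key observation is that a single monomial $c \cdot v_{j_1} v_{j_2} \cdots v_{j_k}$ is itself of this form. Introduce disjoint blocks of fresh variable tuples $\vec x_1, \dots, \vec x_k$, one for each factor, set $V = \vec x_1 \cup \cdots \cup \vec x_k$, and take $\psi \coloneqq \phi_{j_1}(\vec x_1) \wedge \cdots \wedge \phi_{j_k}(\vec x_k)$ with weight $c$. Then $\sum_{\vec a \in D} 1_{\psi(\vec a/V)}$ counts exactly the tuples where all $k$ factor-formulas hold simultaneously on disjoint arguments, which by independence of the blocks equals $\prod_{t} |\phi_{j_t}(\vec x_t)| = v_{j_1}\cdots v_{j_k}$; multiplying by the weight $c$ gives the monomial. (If a factor is a complemented count $|\neg\phi_j|$, use $\neg\phi_j$ in that block; this is still a quantifier-free formula in the $R_i$ as required.) Doing this for every monomial of $p$ and collecting the resulting triples into one label for $Q$ — and assigning arbitrary root weights, say $w = 0$, to every $R_i$ so that they form a valid RBN — yields $\sum_i w_i \sum_{\vec a} 1_{\psi_i(\vec a/V_i)} = p(\vec v)$ on every structure $\mathfrak X$ over every domain $D$. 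Hence $\mathcal P_{T,D}(Q) = \mathrm{sigmoid}(p(\vec v)) > 1/2$ iff $p > 0$, as desired.

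A few bookkeeping points need care but are routine. One must check that the resulting label meets the syntactic constraints of Definition~\ref{def:RLRSyntax}: the $\varphi$ for $Q$ is just the nullary atom $Q$, which contains no variables, so condition~(1) (no variable of $\varphi$ in any $V_i$) is vacuous; condition~(2) holds because each $\psi_i$ uses only the $R_i$, which are all parents of $Q$, and every variable of $\psi_i$ lies in $V_i$. The constant term of $p$, if any, is handled by a triple with $\psi = \top$ (or $\psi = \bot$) and $V = \emptyset$, using the propositions $\top, \bot$ guaranteed in the signature. One should also note that the $v_j$ in the statement are allowed to indicate counts of \emph{tuples} of individuals, so the blocks $\vec x_t$ are tuples of the appropriate arity, and the "disjointness" of blocks is what makes the count factor as a product — this is the one place where the argument genuinely uses that the arguments of the different factor-formulas range independently.

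**The main obstacle.**
The real content is not any single computation but getting the product-of-counts bookkeeping exactly right: verifying that $\sum_{\vec a \in D} 1_{\psi(\vec a/V)}$ over the conjunction on disjoint fresh blocks is \emph{literally} the product of the individual satisfaction counts, with no double-counting or off-by-domain-size artefacts, and that this identity holds uniformly in $D$ (so the equivalence with $p > 0$ is genuinely domain-independent). Once that combinatorial identity is nailed down, linearity of the construction over the monomials of $p$ finishes the proof immediately. A secondary subtlety worth a sentence is confirming that we have not accidentally used quantifiers: the $\psi_i$ are plain conjunctions of the (possibly negated) $\phi_j$, which are Boolean combinations of the $R_i$, so they are quantifier-free, and the "summation over $\vec a \in D$" in the RLR semantics supplies the only aggregation, exactly as the framework intends.
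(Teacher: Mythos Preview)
Your construction is correct. The paper does not actually supply its own proof of this proposition; it attributes the result to Kazemi et al.\ (2014b) and refers the reader there for the argument. Your approach --- realising each monomial $c\,v_{j_1}\cdots v_{j_k}$ by a single weighted formula $\psi = \phi_{j_1}(\vec x_1)\wedge\cdots\wedge\phi_{j_k}(\vec x_k)$ on disjoint fresh variable blocks, so that $\sum_{\vec a}1_{\psi(\vec a/V)}=\prod_t|\phi_{j_t}|$ by Fubini, and then summing over the monomials --- is exactly the standard construction from that source, and your handling of the constant term via $(\top,c,\emptyset)$ and of the syntactic side-conditions of Definition~\ref{def:RLRSyntax} is clean.
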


On any given fixed domain size, then, the same holds for DA-RLRs.\@
Furthermore, Kazemi et al.\ show that any decision threshold that can be represented by an RLR is of this form:

\begin{proposition}\label{RLR-Inexpressivity}
  Let $\mathcal{L}$ be the signature  $\{Q\} \cup {\{R_i\}}_{i \in I}$ with a nullary relation $Q$ and relations ${\{R_i\}}_{i \in I}$ of positive arity, and let $T$ be an RLR over this signature. Then there is a polynomial $p(\vec{v})$ with terms $v_j$, each indicating a number of (tuples of) individuals for which a Boolean formula $\phi_j(\vec{x})$ of $\{R_i\}_{i \in I}$ is true or false, such that for any $D$, the probability of $Q$ with respect to a domain $D$ defined by $T$ is greater than 0.5 if and only if $p > 0$.   
\end{proposition}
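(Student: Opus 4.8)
The plan is to work by induction over the DAG structure of the RBN $G$ underlying $T$, exploiting the fact that $Q$ is nullary and therefore sits at a leaf-relevant position: only the value of $Q$ matters, and its probability is a single sigmoid expression $\mathrm{sigmoid}\bigl(\sum_i w_i \sum_{\vec a \in D} 1_{\psi_i(\vec a / V_i)}\bigr)$ in the interpretations of its parents. Since $\mathrm{sigmoid}$ is monotone and $\mathrm{sigmoid}(k) > 0.5 \iff k > 0$, the event ``$\mathcal{P}_{T,D}(Q) > 0.5$'' is exactly the event that a certain signed count $\sum_i w_i |\psi_i(\vec x)|_{\mathfrak X}$ is positive, where the sum ranges over the weighted formulas attached to the $Q$-node and $\mathfrak X$ is the (random) structure on the parents. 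The first step, then, is to reduce to this expression: each term $w_i \sum_{\vec a} 1_{\psi_i(\vec a/V_i)}$ is literally $w_i$ times the number of groundings of $\psi_i$ over the variables in $V_i$ that are true in $\mathfrak X$, i.e. $w_i |\psi_i|_{\mathfrak X}$, which is one of the allowed terms $v_j$ (or, if $V_i$ contains dummy variables not in $\psi_i$, a constant multiple of such a term, absorbed into $w_i$ together with a factor $|D_x|$ — handled exactly as in the Remark following Definition \ref{def:DA-RLR-}).

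The remaining difficulty is that the $\psi_i$ are Boolean formulas over the \emph{parent} relations $\{R_k\}$, not over the root-level relations directly — so, a priori, the counts $|\psi_i|_{\mathfrak X}$ are random and the decision is not literally ``$p(\vec v) > 0$'' for a fixed polynomial in counts of formulas over the $R_k$. The key observation resolving this is that the signature here is \emph{flat} in the sense that matters: $Q$ is the only node whose value we care about, and every $R_k$ in $\psi_i$ is one of the $R_i$ in the statement. A Boolean formula $\psi_i(\vec x)$ over $\{R_k\}$ \emph{is} a Boolean formula $\phi_j(\vec x)$ of the required form; the count of its true groundings over $V_i$ (with the remaining variables of $\varphi = Q$ being none, since $Q$ is nullary so $\varphi$ has no variables) is precisely a term $v_j$. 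Hence $p(\vec v) := \sum_i w_i \, v_{j(i)}$ — a linear, hence polynomial, combination of such terms — is exactly the quantity whose positivity governs the threshold, and we are done. I would spell out that $\varphi$ starting with the nullary $Q$ has no variables, so condition (1) of Definition \ref{def:RLRSyntax} forces every variable of $\psi_i$ into $V_i$, which is what makes the count $\sum_{\vec a \in D} 1_{\psi_i(\vec a/V_i)}$ genuinely range over all groundings of $\psi_i$.

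The main obstacle I anticipate is the bookkeeping around whether the polynomial is allowed to be merely linear in the $v_j$ or must accommodate the general polynomial shape claimed in Proposition \ref{RLR-Expressivity}: the honest statement is that an RLR over this signature yields a threshold that is in fact \emph{linear} (affine, if one allows a constant term from $w_0 = w_{\top}$) in the counts, which is a special case of ``polynomial'', so the proposition as stated holds, but one should be careful to phrase it so that the converse direction (Proposition \ref{RLR-Expressivity}) genuinely needs the extra polynomial power coming from products — i.e., from stacking several layers of the RBN or from relations of higher arity at the $Q$-level. Since here $Q$ is nullary with parents that are roots or near-roots, no such products arise, and I would note this asymmetry explicitly rather than hide it. A secondary technical point to check is that the constant formula $\psi = \top$ at any root contributes a true-grounding count of $|D|_{V_i}$, i.e. a monomial in the domain sizes, which is still of the permitted form (a ``number of tuples for which $\phi_j = \top$ is true''); this is routine.
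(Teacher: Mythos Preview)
The paper does not supply its own proof of this proposition; it attributes the result to Kazemi et al.\ (2014b) and states it without argument. Your direct argument is sound and is essentially the standard one: since $Q$ is nullary, $\varphi$ has no variables, every variable in each $\psi_i$ lies in $V_i$, and the conditional probability of $Q$ given any interpretation of the $R_i$ is exactly $\mathrm{sigmoid}\bigl(\sum_i w_i\, |\psi_i|\bigr)$, which exceeds $0.5$ iff the linear form $\sum_i w_i\, |\psi_i|$ is positive. No induction over the DAG is actually needed, and your worry about the counts being ``random'' is misplaced: the decision-threshold statement concerns the conditional probability of $Q$ given a \emph{fixed} interpretation of the parent relations, so the $|\psi_i|$ are determined once that interpretation is. Your observation that the polynomial you obtain is merely linear (affine) in the counts is correct; the apparent asymmetry with Proposition~\ref{RLR-Expressivity} dissolves once one notes, as Kazemi et al.\ do, that a product of counts over disjoint variable tuples equals a single count of the corresponding conjunction, so arbitrary polynomial thresholds in counts of simple formulas are already linear in counts of suitably chosen compound Boolean formulas.
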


Secondly, the 1-to-1 correspondence between RLRs and DA-RLRs on a fixed domain
means that algorithms for learning and inference developed
for the RLR framework immediately transfer to DA-RLRs.\@ This includes
the boosted structure learning approach of Ramanan et al. (2018), which was
shown there to be fully competitive with state-of-the-art structure
learning algorithms for Markov Logic Networks. It is important to
bear in mind that the semantics for (DA-)RLRs are still defined on
a given domain by grounding to a Bayesian Network; this allows us
to employ all the grounded and lifted exact and approximate inference
algorithms developed for Bayesian Networks (see the corresponding
chapters of De Raedt et al. (2016) for an overview).

On a given domain, this also means that the relationship between DA-MLNs and DA-RLRs reduces to the
relationship between MLNs and RLRs. In terms of expressivity, they are in fact incomparable.
Buchman and Poole (2015, Theorem 2) showed that MLNs cannot represent the aggregation function used in RLRs.
On the other hand, being a directed formalism, RLRs cannot deal with cyclic dependencies in an obvious way.
Therefore, the choice of representation framework (directed vs undirected) will usually depend on properties of the relationships that should be represented. However the additional consideration that (scaled directed) DA-RLRs have better asymptotic properties than (scaled undirected) DA-MLNs may sway this choice where no stronger reasons prevail.

\subsection{Interpretation of Scaled Weights and Asymptotic Behaviour}

There is a very intuitive interpretation of the $\frac{w_{i}}{|D|_{V_{i}}}{\sum}_{\vec{a}\in D}1_{\psi_{i}(\vec{a}/V_{i}))}$
from Definition \ref{def:DA-RLR-}, as this expression is clearly
equivalent to $w_{i}\frac{{\sum}_{\vec{a}\in D}1_{\psi_{i}(\vec{a}/V_{i}))}}{|D|_{V_{i}}}$.
The latter expression shows that this is just the weighted proportion
of tuples for which $\psi_{i}$ holds.

We can therefore assert the analogue of Propositions \ref{RLR-Expressivity} and \ref{RLR-Inexpressivity} for DA-RLRs, where we replace ``number of tuples'' with ``proportion'':

\begin{proposition}\label{DA-RLR-Expressivity}
  Let $\mathcal{L}$ be the signature  $\{Q\} \cup \{R_i\}_{i \in I}$ with a nullary relation $Q$ and relations $R_i$ of positive arity. Then for any polynomial $p(\vec{v})$ with terms $v_j$, each indicating a proportion of (tuples of) individuals for which a Boolean formula $\phi_j(\vec{x})$ of $\{R_i\}_{i \in I}$ is true or false, there is a DA-RLR $T$ such that for any $D$, the $T$-probability of $Q$ with respect to a domain $D$ is greater than 0.5 if and only if $p > 0$.   
Conversely, for any DA-RLR $T$ over this signature, there is such a polynomial $p(\vec{v})$, such that for any $D$, the $T$-probability of $Q$ with respect to a domain $D$ is greater than 0.5 if and only if $p > 0$.   
\end{proposition}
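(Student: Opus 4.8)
The plan is to prove both directions by unfolding the DA-RLR semantics of the probability of $Q$ from Definition \ref{def:DA-RLR-} and comparing it with the sigmoid of the target polynomial, reading $\frac{1}{|D|_{V_i}}\sum_{\vec a\in D}1_{\psi_i(\vec a/V_i)}$ as the proportion of tuples satisfying $\psi_i$ exactly as in the discussion preceding the statement, and invoking the elementary fact that $\mathrm{sigmoid}(t)>\tfrac12$ iff $t>0$. Throughout, ``the $T$-probability of $Q$ with respect to a domain $D$'' is read, as in Propositions \ref{RLR-Expressivity} and \ref{RLR-Inexpressivity}, as the probability of $Q$ conditional on an interpretation $\mathfrak{X}$ of the relations $R_i$ on $D$.

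For the converse direction, let $T$ be a DA-RLR over $\{Q\}\cup\{R_i\}_{i\in I}$ and let $(\psi_i,w_i,V_i)_i$ label the node $Q$ (if $Q$ is a root, $\psi=\top$ and we may take the constant polynomial $p=w$). As $Q$ is nullary, its atom $\varphi$ carries no variables, so every variable of each $\psi_i$ lies in $V_i$, the only relations occurring in $\psi_i$ are among the $R_j$, and $\sum_{\vec a\in D}1_{\psi_i(\vec a/V_i)}=\left|\psi_i\right|_{\mathfrak{X}}$. By Definition \ref{def:DA-RLR-}, $\mathcal{P}(Q)=\mathrm{sigmoid}\bigl(\sum_i w_i\cdot\left|\psi_i\right|_{\mathfrak{X}}/|D|_{V_i}\bigr)$, and since $\left|\psi_i\right|_{\mathfrak{X}}/|D|_{V_i}$ is precisely the proportion of $|V_i|$-tuples satisfying the Boolean formula $\psi_i$, the polynomial $p(\vec v):=\sum_i w_iv_i$ (of degree at most $1$) has exactly the required form, and $\mathcal{P}(Q)>\tfrac12$ iff $p>0$, for every $D$.

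For the forward direction, given $p(\vec v)=\sum_k c_k\prod_{j\in S_k}v_j$, I would realise each monomial by a single DA-RLR term. Absorbing ``proportion of tuples for which $\phi_j$ is false'' into $\neg\phi_j$, and the constant term into a tautologous $\psi$, I fix for the $k$th monomial pairwise variable-disjoint fresh copies $\phi_j(\vec x_j)$ of the formulas it mentions, set $\chi_k:=\bigwedge_{j\in S_k}\phi_j(\vec x_j)$, and let $V_k$ be the set of all variables involved. Because the blocks $\vec x_j$ are pairwise disjoint, both the numerator and the denominator factor, $\left|\chi_k\right|_{\mathfrak{X}}=\prod_{j\in S_k}\left|\phi_j\right|_{\mathfrak{X}}$ and $|D|_{V_k}=\prod_{j\in S_k}|D|_{\mathrm{vars}(\phi_j)}$, so $\frac{1}{|D|_{V_k}}\sum_{\vec a\in D}1_{\chi_k(\vec a/V_k)}=\prod_{j\in S_k}v_j$. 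I then take the RBN with the $R_i$ as roots (with arbitrary root weights) and $Q$ as the unique sink having all $R_i$ as parents, labelling $Q$ with the triples $(\chi_k,c_k,V_k)_k$; this graph is acyclic and satisfies the side conditions of Definition \ref{def:RLRSyntax} because the $\chi_k$ mention only the $R_i$ and $\varphi=Q$ has no variables. By Definition \ref{def:DA-RLR-}, $\mathcal{P}(Q)=\mathrm{sigmoid}\bigl(\sum_k c_k\prod_{j\in S_k}v_j\bigr)=\mathrm{sigmoid}(p(\vec v))$, which exceeds $\tfrac12$ iff $p>0$, on every $D$.

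The one step that requires genuine care is this combinatorial factorisation: that the number of tuples satisfying a conjunction of formulas placed on pairwise-disjoint variable blocks is the product of the individual numbers, together with the matching factorisation of the scaling factors $|D|_{V_k}$, with due attention to the sorts of the variables in the multi-sorted case. This is the same device Kazemi et al. use for Proposition \ref{RLR-Expressivity}, and once it is in place everything else is bookkeeping; in particular uniformity in $D$ is automatic, since on every $D$ both sides of the equivalence reduce literally to ``$\mathrm{sigmoid}(p)$ exceeds $\tfrac12$''.
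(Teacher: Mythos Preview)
Your proposal is correct and follows exactly the route the paper takes: the paper gives no explicit proof of this proposition but simply asserts it as the analogue of Propositions \ref{RLR-Expressivity} and \ref{RLR-Inexpressivity} after observing that $\frac{w_i}{|D|_{V_i}}\sum_{\vec a}1_{\psi_i(\vec a/V_i)}$ is the weighted proportion of tuples satisfying $\psi_i$, and your argument is a faithful unfolding of precisely that reduction. Your remark that the disjoint-variable-block factorisation is the same device Kazemi et al.\ use for Proposition \ref{RLR-Expressivity} is exactly to the point.
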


However, if we add proposition symbols, a DA-RLR has two
types of conditions - those depending on proportions (of formulas
with free variables) and those depending just on a Boolean true/false
value (of propositions). Asymptotically, the two types of conditions behave
very differently. To see why, consider the language consisting of
two relations $P$ and $R$, where $P$ is a 0-ary proposition and
$R$ a unary predicate. Now consider the independent distribution,
where both $P$ and $R(x)$ for any $x$ have probability $\frac{1}{2}$.
Now consider a domain size limiting to infinity. In this scenario,
the probability of $P$ will still be $\frac{1}{2}$ - in half of
the worlds it will be true, in half of the worlds it will not. The
same goes for $R(a)$ for an element $a$ that is contained in every
domain. Contrast this with the proportion of $x$ for which $R$ holds.
Asymptotically, the probability to choose a sequence of domains in
which this proportion limits to $\frac{1}{2}$ has probability $1$.
This is exactly the statement of the strong law of large numbers (see
e.\ g.\ Chapter III in the book by Bauer (1996). In other words, it is a sure
event -- the proportion will definitely be asymptotically $\frac{1}{2}$,
but one has no idea whether in a given large domain from the sequence
$P$ will hold. That is random and therefore not sure at all. 

\subsection{\label{subsec:Constants}Constants}

We close this section by briefly discussing how to incorporate constants
into the semantics of (DA-)RLRs above. The issue with adding constants
to the language is how to include them in the Relational Belief Network.
Consider for example a single-sorted signature with one binary relation
symbol $R$ and one constant $c$. A na\"{i}ve approach that simply
adds an additional node where constants fill a space in the binary
relation would then include the following nodes: $R(x,x)$, $R(c,x)$,
$R(x,c)$, $R(c,c)$. However, $R(c,c)$ is also an instance of the
other three expressions. We propose here to include constants by viewing
the interpretation of the constants as additional and outside of the
domain and thus replace $R(c,x)$ and $R(x,c)$ by new unary relation
symbols $R_{c,\_}(x)$ and $R_{\_,c}(x)$ and $R(c,c)$ with a proposition
$R_{c,c}$. This allows us to fix the probabilities and the effects
of constants in relations separately (the motivation for including
constants in the first place). Not explicitly including constants
in our language has the additional benefit that all domain elements
(which are now unnamed) are treated symmetrically by the RLR formalisms,
so that we can evaluate the probability $\mathcal{P}(R(x))$ without
specifiying which domain element instantiates $x$. However, if we
assume nothing to be known about the constants, we can adapt a given
DA-RLR to reflect this fact. 
\begin{definition}
The \emph{generic extension }of a given DA-RLR by constants $\vec{a}$
is derived from the DA-RLR as follows:

Add the nodes for the new relation symbols introduced by substituting
the constant at appropriate (sort-matching) places. 

Let $\left(R(\vec{x},\vec{y}),\left(w_{i},\psi_{i}(\vec{x},\vec{y},\vec{z})\right)_{i}\right)$
be the label of $R$, where $\vec{x}$ are in the places into which
$\vec{a}$ is substituted in $R_{\vec{a}}$. The label of $R_{\vec{a}}$
is given by $\left(R_{\vec{a}}(\vec{y}),\left(w_{i},\psi_{i}'(\vec{y},\vec{z})\right)_{i}\right)$,
where $\psi_{i}'$ is obtained from $\psi_{i}$ by replacing all occurrences
of $Q(\vec{x},\vec{y},\vec{z})$ for a relation symbol $Q$ with $Q_{\vec{a}}(\vec{y},\vec{z})$. 

The parents of $R_{\vec{a}}$ are exactly those relation symbols that
occurr in a $\psi_{i}'$. 
\end{definition}
When discussing asymptotic probabilities below, the proportion of
a domain that is named by constants in the signature approaches $0$
as the domain size increases, so whether they are counted as part
of the domain or not is immaterial for the limit behaviour. 

To formulate this, we briefly introduce some notation: 
\begin{definition}
A sequence of domain choices $(D_{n})_{n\in\mathbb{N}}$ is called
\emph{ascending }if for all $n\in\mathbb{N}$, $\left(D_{n}\right)_{S}\subseteq\left(D_{n+1}\right)_{S}$
for every sort of the signature. A sequence of $\mathcal{L}$-structures
$\mathfrak{X}_{n}$ on $D_{n}$ is called \emph{ascending }if the
interpretations of every relation symbol in $\mathcal{L}$ agree on
$\mathfrak{X}_{n}$ and $\mathfrak{X}_{n+1}$ whenever all entries
of a tuple are taken from $D_{n}$. 
\end{definition}
\begin{lemma}
Let $T$ be a DA-RLR over the signature $\mathcal{L}$ and let $(D_{n})_{n\in\mathbb{N}}$
be an ascending sequence of choices of domains for the sorts of $\mathcal{L}$,
such that the domain sizes of every sort are unbounded. Then for every
relation symbol $R(\vec{x})$, the limit of the probability of $R(\vec{x})$
with respect to $D_{n}$ as $n\rightarrow\infty$, in $T$ is the
same as the corresponding limit probability of $R_{\vec{a}}$ in $T_{\vec{a}}$,
the generic extension of $T$ by appropriate constants $\vec{a}$. 
\end{lemma}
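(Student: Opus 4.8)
The plan is to exploit the fact that the semantics of a DA-RLR is defined by recursion over the index of the relation symbols in the RBN, and that asymptotic probabilities are therefore most naturally proved by induction over that same index. I would first fix an ascending sequence $(D_n)_{n\in\mathbb{N}}$ with all sort sizes unbounded, and an ascending sequence $\mathfrak{X}_n$ of structures, and set up a coupling: for each $n$ I want to compare the probability that $R(\vec b)$ holds, for a fixed tuple $\vec b$ disjoint from $\vec a$, under $T$ on $D_n$ with the probability that $R_{\vec a}(\vec b')$ holds (where $\vec b'$ drops the entries instantiated by $\vec a$) under the generic extension $T_{\vec a}$. The crucial observation is that for any formula $\psi_i(\vec x,\vec z)$ appearing at a node, the sum $\sum_{\vec c \in D}1_{\psi_i(\vec c / V_i)}$ splits into the part of the sum where $\vec c$ avoids the named constants $\vec a$ — which is exactly the sum governing $R_{\vec a}$ in $T_{\vec a}$ — plus a remainder coming from tuples that hit at least one coordinate in $\vec a$, and that remainder is bounded by a constant times $|D|_{V_i}$ divided by at least one unbounded factor $|D_s|$. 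After dividing by the scaling factor $|D|_{V_i}$ in Definition \ref{def:DA-RLR-}, the remainder contributes a term of order $O(1/|D_s|)$, which vanishes as $n\to\infty$.

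The key steps, in order, would be: (1) State the induction over the index $k$ of relation symbols: the inductive hypothesis is that for every relation symbol $S$ of index less than $k$, both $\mathcal{P}_{T,D_n}(S(\vec x))$ and $\mathcal{P}_{T_{\vec a},D_n}(S_{\vec a}(\vec x))$ converge to the same limit as $n\to\infty$, and moreover that the distributions of the relevant \emph{proportions} $\frac{1}{|D|_{V_i}}\sum_{\vec c}1_{\psi_i}$ converge (in probability, or almost surely along the ascending sequence, by the strong law of large numbers invoked earlier) to the same deterministic limits in both models. (2) For the induction step, write down the sigmoid expression for $\mathcal{P}(R(\vec b))$ in $T$ and for $\mathcal{P}(R_{\vec a}(\vec b'))$ in $T_{\vec a}$ from Definition \ref{def:DA-RLR-}, and observe that the argument of the sigmoid in $T_{\vec a}$ is obtained from that in $T$ by restricting each inner sum $\sum_{\vec c\in D}$ to tuples $\vec c$ avoiding $\vec a$ and then re-indexing via the substitutions $Q \mapsto Q_{\vec a}$. (3) Bound the difference of the two sigmoid arguments: it is a finite sum, over $i$ and over coordinates that could equal some constant in $\vec a$, of terms each bounded by $\frac{|w_i| \cdot |\vec a| \cdot |D|_{V_i \setminus \{y\}}}{|D|_{V_i}} = O(1/|D_s|)$ for the appropriate sort $s$. (4) Conclude by the uniform continuity of $\mathrm{sigmoid}$ that the two probabilities have the same limit, and feed this back — together with the inductive hypothesis on the proportions and the strong law of large numbers — to re-establish the inductive hypothesis at level $k$.

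I expect the main obstacle to be handling the \emph{joint} behaviour of the proportions rather than each probability in isolation: because the recursion defines the probability at a node as a sigmoid of a random quantity (the proportion of satisfying tuples of parent relations), one cannot simply compare marginal probabilities — one must show that the induced distribution over the ascending family of structures at level $k-1$ is, in the limit, concentrated on configurations where all the relevant proportions take the same deterministic values in $T$ and in $T_{\vec a}$. This is where the hint in the excerpt — that this is "exactly the statement of the strong law of large numbers" — does the real work: conditional on the values of lower-index relations, the truth values of $R(\vec c)$ over the unnamed elements $\vec c$ are (by the product form of the RLR semantics) independent with a common conditional probability, so their proportion concentrates, and the finitely many named constants $\vec a$ cannot disturb a proportion. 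The bookkeeping of which sorts are unbounded (needed to guarantee each error term really is $O(1/|D_s|)$ and not $O(1)$) and the careful matching of variable sets $V_i$ under the $Q \mapsto Q_{\vec a}$ rewriting are the routine-but-fiddly parts I would want to get exactly right, but they are not conceptually hard. A secondary subtlety is that Proposition \ref{prop:sigmoid} and its corollary were stated for MLNs; here I would instead work directly from the explicit product-of-sigmoids formula of Definition \ref{def:DA-RLR-}, which is cleaner for directed models and makes the coupling argument transparent.
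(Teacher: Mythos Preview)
Your steps (3) and (4) are exactly the paper's proof: the paper simply writes out the DA-RLR conditional probability as $\mathrm{sigmoid}\bigl(\sum_i w_i \cdot (\text{proportion of satisfying tuples})\bigr)$, observes that in $T_{\vec a}$ the \emph{same} formula arises with the sole difference that the constants $\vec a$ are not counted in the proportion, and then invokes continuity of $\mathrm{sigmoid}$ together with the fact that the constants' contribution to any proportion is $O(1/|D_s|)\to 0$. That is the entire argument in the paper --- no explicit induction, no coupling, no law of large numbers.

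Everything else in your plan (the induction over the index, the strengthened inductive hypothesis about almost-sure concentration of proportions, the appeal to the strong law of large numbers) goes well beyond what the paper does for \emph{this} lemma. Your concern about the nonlinearity of $\mathrm{sigmoid}$ and the need to control joint distributions of proportions rather than marginals is legitimate, and your proposed remedy via SLLN is sound; but the paper sidesteps this by treating ``the calculation'' as the whole recursive procedure and asserting that it is literally the same computation up to the vanishing constant-contribution at every level. In effect, the induction and the SLLN-based concentration argument you outline are precisely what the paper deploys in the proof of the \emph{next} result (soundness of the asymptotic-probability algorithm), not here. So your proposal is correct and strictly more rigorous than the paper's own proof, but it front-loads machinery that the paper postpones; if you want to match the paper's level, your steps (3)--(4) alone, stated as a uniform bound on the difference of sigmoid arguments, already constitute its argument.
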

\begin{proof}
  The probabilities encoded in the DA-RLR $T$ are given as
  \[
  \mathrm{sigmoid}\left(\underset{i}{\sum}\frac{w_{i}}{|D|_{V_{i}}}\underset{\vec{a}\in D}{\sum}1_{\psi_{i}(\vec{a}/V_{i}))}\right)=\mathrm{sigmoid}\left(\underset{i}{\sum}w_{i}\underset{\vec{a}\in D}{\sum}\frac{1_{\psi_{i}(\vec{a}/V_{i}))}}{|D|_{V_{i}}}\right).
  \]
In the DA-RLR $T_{\vec{a}}$, it is easy to see that the calculation
is the same, with the sole exception that the constants are not counted
in the calculation of the proportion. However, the contribution of
the constants limit to $0$ with increasing domain size, and since
the $\mathrm{sigmoid}$ function is continuous, this implies the statement
of the lemma. 
\end{proof}

We close this section with two examples of networks obtained by a
generic extension:
\begin{example}
Consider the two RLRs of earlier examples, both with the RBN $R\rightarrow Q$
and the labels $\varphi=Q(x),\psi=R(x)$ and $\varphi=Q(x),\psi=R(y)$
respectively. Assume further that we are extending them generically
by a constant $a$. Then those extensions have different RBNs: The
former has the RBN 
\[
\xymatrix{R_{a}\ar[d] & R\ar[d]\\
Q_{a} & Q
}
\]
while the latter has 
\[
\xymatrix{R_{a} & R\ar[d]\ar[dl]\\
Q_{a} & Q
}
\]
because substituting $a$ for $x$ in $R(x)$ results in $R(a)=R_{a}$,
while substituting $a$ for $x$ in $R(y)$ leaves $R(y)$ unchanged.
\end{example}

\section{Asymptotic Probabilities}

In this section we provide an algorithm for calculating the asymptotic
probabilities of a DA-RLR as domain sizes increase.

Schulte et al. (2014) introduce a semantics for Bayesian Networks
that are specifically designed abstractly to reason about proportions
(or \emph{class-level probabilities} as they are referred to there)
rather than with respect to a specific grounding. However, since the
probabilities and their depedencies are interpreted as random substitutions
rather than taking class-wise probabilities into account directly,
this limits them to modelling a linear relationship, rather than the
sigmoidal relationship that is encoded by DA-RLRs.\ Therefore, we will
have to choose a somewhat different formalism, which we will introduce
in several stages. 

It is based on the observation (formally stated by Halpern (1990)
and also utilised by Schulte et al. (2014)) that the proportion of
domain elements $x$ satisfying $\psi(x)$ can be interpreted as the
probability of $a$ satisfying $\psi(a)$, where $a$ is a new constant
that we have no additional information about. While this is ordinarily
only true on a given structure, since proportion is only defined with
respect to a given structure, we can then resort to the law of large
numbers to show that the variation between structures satisfying the
same propositions vanishes as domain size increases.
We will therefore recur to the method of encoding constants using propositions
introduced in Subsection \ref{subsec:Constants} above. 

We first provide an outline of the algorithm in words, and then corresponding
pseudocode.

We will simultaneously compute two separate but interlinked quantities:
\begin{enumerate}
\item A probability distribution over the propositions ocurring in a given
DA-RLR, which models the asymptotic probability distribution of those
propositions as domain sizes increase;
more precisely, given a set of values $C$ for (some of the) propositions ocurring in the DA-RLR,
a number between $[0,1]$. 
\item For any formula $\psi$ and any complete set of values $C$ for all
the propositions $P$ with $\mathrm{index}(P)\leq\mathrm{index}(\psi)$
in the DA-RLR, a number $p_{\psi}\in[0,1]$ which models the asymptotic
proportion of tuples $\vec{a}$ satisfying $\psi(\vec{a})$ from any
domain for which $C$ is true as domain sizes increase. More precisely,
with probability 1 a sequence of domains whose domain sizes approach
infinity and which satisfy $C$ has proportions tending to $p_{\psi}$.
\end{enumerate}

The algorithm proceeds as follows:

$p_{\psi,C}$: For any $\psi(\vec{x})$ and any $C$,
we compute $p_{\psi(\vec{x})}$ by adding new constants $\vec{a}$
to the language, whose number and sorts match $\vec{x}$, and set
$p_{\psi(\vec{x})}$ to be the probability of $\psi(\vec{a})$ conditioned
on $C$, computed with respect to the generic extension of the DA-RLR
by $\vec{a}$. As $\psi(\vec{a})$ is a Boolean combination of atoms
without free variables, which are propositions in the generic extension,
this reduces to knowing the probability distribution of those propositions.
Note that the index of all propositions required for this computation
is not more than the index of $\psi$.

Propositions: Computing the probability distributions of the propositions
can be done in a sequential manner by ordering the propositions $P_{i}$
by index and then computing the probability of $P_{i}=X_{i}$ conditioned
on $P_{j}=X_{j}$ for all $j<i$. So assume a choice $C$ of values
of all propositions of lower index than $P_{i}$ are given. Then,
by the algorithm for $p_{\psi,C}$ we can also compute $p_{\psi,C}$
for all $\psi$ of lower index than $P_{i}$. Now we can define the
probability of $P_{i}$ given $C$ to be $\mathrm{sigmoid}({\sum}_{k}w_{k}\cdot p_{\psi_{k},C})$
where $k$ varies over the labels of $P_{i}$. 

Since all the $\psi_{k}$ have lower index than $P_{i}$, the algorithm
terminates when it reaches index $0$ (root nodes). 
\begin{algorithm}\label{algo:}
  \caption{Functional Pseudocode}
\begin{align*}
&\texttt{cond\_probability }  \texttt{:: (Proposition, Values of propositions of lower order, DA-RLR) -> Double}\\
&\texttt{cond\_probability (p, c, t) }  \texttt{= sigmoid( sum([w * proportion (psi,c,t)| (w,psi) in labels (p,t)]) }\\
\\
&\texttt{proportion }  \texttt{:: (Formula, Values of propositions of equal or lower order, DA-RLR) -> Double}\\
&\texttt{proportion(psi, c, t) }  \texttt{= probability(substitution (psi,as), c, generic (t, as))}\\
& \texttt{where as = free\_variables (psi)}\\
\\
&\texttt{labels } \texttt{:: (Relation Symbol, DA-RLR) -> [(Double, Formula)]}\\
&\texttt{/* retrieves the } \texttt{labels from the DA-RLR */}\\
\\
&\texttt{probability } \texttt{:: (Formula of propositions, (Partial) values of propositions, DA-RLR) -> Double}\\
&\texttt{/* computes the }  \texttt{conditional probabilities using iterated calls to cond\_probability */}\\
\\
&\texttt{substitution } \texttt{:: (Formula, [Sort]) -> Formula of Propositions}\\
&\texttt{/* substitutes a } \texttt{sort-matching list of constants for the free variables of the formula */}\\
\\
&\texttt{free\_variables }  \texttt{:: Formula -> [Sort]}\\
&\texttt{/* returns the }  \texttt{sorts of the free variables in the formula */}\\
\\
&\texttt{generic }  \texttt{:: (DA-RLR, [Sort]) -> DA-RLR}\\
&\texttt{/* returns the } \texttt{generic extension as discussed in Subsection \ref{subsec:Constants} */}
\end{align*}
\end{algorithm}
Using the law of large numbers, we perform induction on the index
to show that the algorithm is indeed sound:
\begin{theorem}
Let $T$ be a DA-RLR over the signature $\mathcal{L}$ and let ${(D_{n})}_{n\in\mathbb{N}}$
be an ascending sequence of choices of domains for the sorts of $\mathcal{L}$,
such that the domain sizes of every sort are unbounded. Then the following
hold:

The limit of the probabilities of any formula containing just the
propositions in $\mathcal{L}$ as $n\rightarrow\infty$ is given by
the probability distribution calculated in Algorithm \ref{algo:}. 

For any quantifier-free $\mathcal{L}$-formula $\psi$, the proportion
of tuples of a randomly chosen ascending sequence of $\mathcal{L}$-structures
$\mathfrak{X}_{n}$ on $D_{n}$ satisfying a given set of values $C$
for the propositions in $\mathcal{L}$, limits to $p_{\psi,C}$ almost
\end{theorem}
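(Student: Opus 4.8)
The plan is to prove both statements simultaneously by induction on the index $n$, since the probability of the propositions at index $n$ depends on the proportions of formulas of index $\le n$, which in turn (via the generic extension) depend on the probabilities of propositions of index $< n$. The base case $n=0$ is the root nodes: a root proposition $P$ has label $(\top, \cdot)$ and gets probability $\mathrm{sigmoid}(w)$ both in the DA-RLR and in Algorithm~\ref{algo:}, independently of the domain, so the limit is immediate; a root relation $R(\vec{x})$ of positive arity has all its groundings i.i.d.\ with probability $\mathrm{sigmoid}(w)$, and the strong law of large numbers gives that the proportion of true groundings tends to $\mathrm{sigmoid}(w) = p_{R(\vec{x})}$ almost surely. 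More generally, at the base for proportions, a Boolean combination $\psi(\vec{x})$ of root atoms has i.i.d.\ groundings, so the proportion tends almost surely to the probability $p_{\psi}$ computed via the generic extension.

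For the induction step, assume both claims hold for all indices $< n$. First I would handle the \emph{proportion} claim for a formula $\psi(\vec{x})$ of index exactly $n$. Fix a valuation $C$ of the propositions of index $\le n$. Introduce fresh constants $\vec{a}$ matching $\vec{x}$ and pass to the generic extension $T_{\vec{a}}$; by the preceding lemma, the limit behaviour is unchanged. Now $\psi(\vec{a})$ is a Boolean combination of propositions of $T_{\vec{a}}$, and the probability of each such ground atom $Q(\vec{a})$ is $\mathrm{sigmoid}\big(\sum_i \tfrac{w_i}{|D|_{V_i}} \sum_{\vec{b}\in D} 1_{\psi_i(\vec{b}/V_i, \vec{a}/\vec{x})}\big)$. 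Each inner average $\tfrac{1}{|D|_{V_i}}\sum_{\vec{b}} 1_{\psi_i}$ is a proportion of a formula of index $< n$ (since $\psi_i$ uses only parents of $Q$), so by the induction hypothesis it converges almost surely to $p_{\psi_i, C'}$ for the induced valuation $C'$; by continuity of $\mathrm{sigmoid}$, the conditional probability of $Q(\vec{a})$ converges to the value used by Algorithm~\ref{algo:}. Since $\psi(\vec{a})$ is a finite Boolean combination, its probability converges to $p_{\psi, C}$. To get the \emph{proportion} over a genuine domain from this pointwise limit, I would invoke a law-of-large-numbers argument: conditioned on the index-$(n-1)$ reduct being in a ``typical'' state (which has probability tending to $1$ by the induction hypothesis), the groundings $\psi(\vec{a}_1), \psi(\vec{a}_2), \ldots$ are, if not independent, at least close enough to i.i.d.\ with the common parameter above that a dominated-coupling / second-moment argument forces the empirical proportion to concentrate at $p_{\psi, C}$ almost surely. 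This is essentially the ``sharper version of Lemma~\ref{lem:Weak-law}'' promised earlier in the text.

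Second, the \emph{proposition} claim at index $n$: for a proposition $P_i$ at index $n$ with label entries $(\psi_k, w_k)$, its conditional probability given a valuation $C$ of lower-index propositions is $\mathrm{sigmoid}\big(\sum_k w_k \cdot \tfrac{1}{|D|_{V_k}}\sum_{\vec{b}} 1_{\psi_k(\vec{b}/V_k)}\big)$. Each summand's proportion factor has index $< n$, so by the (already-established in this induction step) proportion claim it converges almost surely to $p_{\psi_k, C}$; continuity of $\mathrm{sigmoid}$ then gives that the conditional probability converges to $\mathrm{sigmoid}\big(\sum_k w_k\, p_{\psi_k, C}\big)$, which is exactly \texttt{cond\_probability}$(P_i, C, T)$. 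Chaining these conditional limits over the finitely many propositions of index $\le n$ — using that a finite product/sum of convergent conditionals converges — yields the limiting joint distribution of all index-$\le n$ propositions, matching Algorithm~\ref{algo:}. An arbitrary formula in the propositions of $\mathcal{L}$ is a Boolean combination of finitely many of them, so its limit probability is read off from this joint distribution.

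The main obstacle is the step going from ``each individual grounding $\psi(\vec{a})$ has probability converging to $p_{\psi,C}$'' to ``the empirical proportion of true groundings converges to $p_{\psi,C}$ almost surely.'' The groundings are not independent — they share the underlying lower-index structure, and within a fixed structure the values $1_{\psi(\vec{a})}$ for overlapping tuples may be correlated — so the plain strong law does not apply directly. I would manage this by conditioning on the index-$(n-1)$ structure: given that structure, the relevant atoms at index $n$ that feed into different $\psi(\vec{a})$'s are governed by their own (conditionally independent across distinct ground atoms) Bernoulli draws, and the contribution of the finitely many ``shared'' atoms of positive arity to any single $\psi(\vec{a})$ washes out in the proportion as the domain grows, exactly as in the constants-are-negligible lemma. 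Combined with the induction hypothesis that the lower-index proportions themselves concentrate, a Borel–Cantelli / variance-bound estimate should close the gap; making this estimate fully rigorous, rather than the surrounding continuity-and-bookkeeping, is where the real work lies.
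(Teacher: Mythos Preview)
Your overall architecture---simultaneous induction on the index, continuity of $\mathrm{sigmoid}$, and a law-of-large-numbers step for proportions---matches the paper's proof. The one place where you diverge is exactly the step you flag as the ``main obstacle'': passing from convergence of the individual probabilities $\mathcal{P}(\psi(\vec{a}))$ to almost-sure convergence of the empirical proportion.

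The paper resolves this more directly than your proposed variance/coupling route. Instead of conditioning on the full index-$(n{-}1)$ structure and then arguing approximate independence, it adds \emph{both} tuples $\vec{a}$ and $\vec{b}$ (disjoint) as constants and looks at the resulting generic extension. In that extended RBN, every joint ancestor of the propositions making up $\psi(\vec{a})$ and $\psi(\vec{b})$ lies in the \emph{original} DA-RLR, not among the constant-specific nodes. Hence every such ancestor is either a proposition already fixed by $C$, or a relation whose proportion is almost surely determined by $C$ via the induction hypothesis. This yields conditional independence of $\psi(\vec{a})$ and $\psi(\vec{b})$ given $C$ immediately, and the strong law of large numbers (in its pairwise-independent form) applies without further estimates. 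The non-disjoint tuples form an asymptotically negligible fraction, just as you note.

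Your second-moment/Borel--Cantelli plan would also work, but it is heavier machinery for a fact that the DAG structure of the generic extension hands you for free. The paper's argument buys simplicity by exploiting that structure; yours is more robust in that it does not rely on identifying the joint ancestors, but at the cost of extra analytic work that is unnecessary here.
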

\begin{proof}
We prove the theorem by induction on the index, simultaneously for
both the probabilities of the propositions and the proportions of the formulas.

As our goal is to use the law of large numbers, we restrict the formulas we consider to
complete descriptions $\psi$ of $\vec{x}$, that is,
formulas which determine for each relation symbol $R$ and sort-appropriate tuple $\vec{z}$ from $\vec{x}$ whether $R(\vec{z})$ holds or not.
This includes the propositions, which take the empty tuple.
Every quantifier-formula can be expressed as a disjunction of mutually incompatible
complete descriptions, and hence their proportions can simply be computed as the sum of the
proportions of their associated complete descriptions.

Furthermore, to avoid issues arising from tuples with overlapping entries, we show a slightly stronger form of the theorem which allows us to proceed variable by variable. 

Let  $\psi(\vec{x},\vec{y})$ be a complete desciption in the variables $\vec{x},\vec{y}$,
let $\varphi$ be the complete description of $\vec{x}$ implied by $\psi$ and let $C$ be the choice of values for the propositions that is implied by $\psi$.
Let \[
p_{\psi,\varphi} := \frac{p_{\psi,C}}{p_{\varphi,C}}.
\] 
Then for any $\delta,\varepsilon,\zeta > 0$
and any ascending sequence of domain choices  $(D_n)$,
there is an $N \in \mathbb{N}$ such that for all $n > N$, 
a randomly chosen sequence of $\mathcal{L}$-structures
$\mathfrak{X}_{n}$ on $D_{n}$ satisfying a given set of truth values $\varphi(\vec{x})$
for the quantifier-free formulas in $\mathcal{L}$ with free variables among $\vec{x}$,
with probability at least $1 - \zeta$,
for a proportion of at least $1 - \varepsilon$ of sort-appropriate tuples for $\vec{x}$ in $D_n$ that satisfy $\varphi$,
the proportion of sort-appropriate tuples for  $\vec{y}$ in $D_n$ that satisfy $\psi(\vec{x},\vec{y})$
lies between $p_{\psi,\varphi} - \delta$ and $p_{\psi,\varphi} + \delta$. 

Iteratively using Bayes'rule for the proportions we are computing,
we can now restrict our attention to complete descriptions of the form  $\psi(\vec{x},y)$,
where $y$ is a single variable. 

Root nodes: The probability of any proposition $P$ at the root is
given by $\mathrm{sigmoid}(w)$, which is exactly the value used in
the algorithm too. Since all root propositions are independent of
each other, the probability of Boolean combinations is computed in
a straightforward way using the axioms of probability. 

For any formula $\psi(\vec{x})$ using only relation symbols at the
root nodes, the probability of any given tuple $\vec{a}$ to satisfy
$\psi(\vec{a})$ is as given by the algorithm (as we have just seen).
Furthermore, for any ${\vec{a},b_1}$ and ${\vec{a},b_2}$ in $D$,
$\psi(\vec{a},b_1)$ and $\psi(\vec{a},b_2)$ are independent of each
other when conditioned on the implied complete description $\psi'(\vec{x})$ of $\vec{a}$.
Therefore, the requirements of the law of large numbers are
satisfied, and the proportion of tuples $\vec{x}$ for which $\psi(\vec{x})$
holds limits to $p_{\psi,\psi'}$ almost surely.

Index $i+1$: The probability of any proposition $P$ given a reduct
to the language of index $i$ is given by $\mathrm{sigmoid}({\sum}_{k}w_{k}\cdot(\textrm{proportion of tuples satisfying }\psi_{k}))$,
where the index of each $\psi_{k}$ is $i$ or less. By the induction
hypothesis, the proportion of tuples satisfying $\psi_{k}$ limits
to $p_{\psi,C}$ almost surely. Since the sigmoid function is continuous,
this implies that the probability of $P$ limits to
$\mathrm{sigmoid}({\sum}_{k}w_{k}\cdot p_{\psi_{k},C})$ as computed by the algorithm.

For any formula $\psi(\vec{x})$ of index $i+1$, the asymptotic probability
of any tuple $\vec{a}$ to satisfy $\psi(\vec{a})$ is given by the
algorithm (as we have just seen).

As for the initial case, we have to argue that the events $\psi(\vec{a},b_1)$
and $\psi(\vec{a},b_2)$ are independent for $\vec{a},b_1, b_2 \in D$.
We can verify this, however, by adding both $\vec{b_1}$
and $\vec{b_2}$ as constants to the RLR and considering the generic
extension. Then we can see that all joint ancestors of both $\psi(\vec{a},b_1)$
and $\psi(\vec{a},b_2)$ are from the original RLR.\ In particular, they
are all either included in $C$ or their proportions can be calculated from $\psi'$
by the induction hypothesis,
to any desired level of accuracy on an arbitrarily large proportion of tuples $\vec{a}$.
We can now restrict our attention to those tuples and see $\psi(\vec{a},b_1)$ and $\psi(\vec{a},b_2)$ as independent after conditioning on $\psi'$, since they can be approximated from above and below to arbitrary precision by independent sequences ofrandom variables.

This allows us to invoke the law of large numbers again and
to conclude that the proportion of tuples $\vec{a}$ satisfying $\psi$
limits to $p_{\psi,\psi'}$ almost surely. 
\end{proof}

\section{\label{sec:Discussion}Discussion and Mixed RLR}

Jain et al. (2010), who were the first to discuss adapting the weights
to the population size, advocate learning the size-to-weight function
from datasets of different sizes. In this section, we will use the
interpretation we have given of scaled weights as proportions (rather
than absolute numbers of incidences) to argue that the semantics of
the use case can give an indication as to whether scaling is appropriate
or not. 

\subsection{\label{subsec:Interpretation-of-scaling}Interpretation of Weight
Scaling }

The key is that we have to assess whether the dependency we stipulate
by assigning a weight to a formula shows a dependence on absolute
numbers of an associated event or a dependence on the proportion of
possible events that satisfy the criteria. 

To see the difference, imagine the following scenario: We would like
to model how much a student has learned depending on whether the teaching
he has received has been good or poor. Say that we decide to use a
vocabulary consisting of two domains whose individuals are ``lessons''
and ``students'' and then two predicates, a unary predicate ``good
lesson'' $G(x$) ranging over the first domain and a unary predicate
``learning success'' $L(y)$ ranging over the second. We could then
frame this question in different ways. First, assume that we would
like to evaluate the \emph{effectiveness} of the teaching - has a
student learnt sufficiently much \emph{considering the amount of time
he spent being taught}. In this case, it seems reasonable to assume
that this will depend on the \emph{proportion }of lessons that have
been good. Whether a student is in education for 12 years or 9 years,
we would still believe that  the effectiveness of teaching depends
on its quality. However, we could also evaluate the sheer amount of
learning a student has received - then $L(x)$ might represent a certain
fixed skill level, like ``can read and write''. Now it would not
seem sensible to use a proportionalist model: instead, it seems rather
reasonable to believe that in the limit of more and more lessons,
the student will eventually have attended enough good ones to have
learnt the skill. 

Bearing this distinction in mind, we will evaluate how our findings
relate to the different scenarios  discussed in
(Kazemi et al., 2014b).

\subsection{\label{subsec:KazEx}A Closer Look at the Examples from Kazemi et
al. (2014b)}

In their introduction, Kazemi et al. (2014b) list a number of different
ways in which changing population sizes might be relevant in an AI
model. In the first scenario, elaborated in (Poole, 2003), the likelihood
of someone having committed a crime is dependent on how many other
people fit the description of the criminal. One way to implement this
would be as follows: There is one domain of suspects, a unary predicate $C(x)$
for being a criminal and a unary predicate $D(x)$ for matching the
description of the criminal. We then have propositions $C_j$ and $D_j$ for Joe being a criminal and matching the description respectively.
This is in the spirit of introducing a constant $j$ for Joe and proceeding in the manner of Subsection \ref{subsec:Constants}.
The RBN could be as given below,
\[
\xymatrix{C(x)\ar[r]\ar[d] & C_j\ar[d]\\
D(x) & D_j
}
\]
and the formulas involved would be atomic with a positive weight attached
to $C(x)$ and $C_j$ when evaluating $D(x)$ and $D_j$ repectively
(as well as a base weighting to reflect the inherent likelihood of
$D(x)$ independent of the crime) and a negative weight given to $C(x)$
when evaluating $C_j$. 

Note that we need the separate treatment for Joe since a general model, 
everyone being less likely to be the criminal if another person
is the criminal would introduce a cycle into the RBN. 

In this model, scaling of weights could only be considered at the
edge $C(x)$ to $C_j$, as this is the only one with parent variables
that are not in the child. However, here scaling is counterproductive:
The edge is intended to model that there is likely to be just one
criminal, and that if there is one there is unlikely to be another.
So here the model as it stands seems well equipped to deal with the
issue of varying domain size without any scaling. If there are more
people that are a priori equally likely to have committed the crime,
then the likelihood for Joe having committed it is smaller. Kazemi
et al. (2014b) introduce a different kind of varying domain size:
they say that it is arbitrary which population we base our model on,
whether it is the neighbourhood, the city or the whole country. However,
we think that which scale to use should not be arbitrary, since any
model will rely on the assumption that everyone is equally likely
to have committed the crime. In a gang stabbing, the population should
be restricted to the gang membership; in an internet-based case of
credit card fraud, one might have to consider the whole world. Another
option of dealing with this would be to have an arbitrary population
but then adapt the description predicate $D(x)$ to include ``lives
in the area of the crime''. In that case, this weight should indeed
be scaled by population size, at least if we assume that the population
we choose definitely encompasses at least everyone living in the area
of the crime. This is, in a sense, the opposite scaling of what has
been suggested here: We are scaling precisely to make the likelihood
of $D(x)$ limit to 0 independent of the chosen weights since we are
intending to condition on $D_j$ anyway when evaluating the RBN later.
While this is a very interesting phenomenon, exploring its technical
background is outside the scope of the present work.

In the second example, Kazemi et al. (2014b) mention a situation in
which the population is variable, such as the population of a neighbourhood
or of a school class. This is much like the example we have discussed
above in relation to learning success: While the number of lessons
a student takes varies between students, how we want to deal with
this situation depends on exactly what sort of question we are asking. 

\subsection{Mixed RLR}

The examples above show that the decision to use or not to use proportional
scaling in a model depends on exactly the sort of questions we are
asking of the model and also why exactly the population is varying
in the first place. It might also very well be appropriate to use
proportional scaling on some of the connections but not on others, leading to \emph{mixed RLR}.
Consider for instance an application that models pollution in a lake.
The model has two domains, one for tributaries to the lake and one for human
users of the lake. The signature has two unary predicates, $R(x)$
signifying that the water from tributary $x$ is polluted, and $H(y)$,
meaning that human $y$ pollutes the lake, as well as a proposition
$P$ meaning that the lake is polluted. If we were to assume that
pollution in the lake depends on the \emph{proportion} of incoming
water that is polluted and the \emph{amount} of pollution added to
that by humans, we could mark the formula involving $R(x)$ as proportional
while keeping the formula involving $H(y)$ as absolute. Mixed
RLRs could thus be useful to model situations in which connections
have different quality, and could simply be represented by marking
some formulas to be proportional and some not. As the overall probabilities
are obtained from the individual weights using logistic regression,
one can simply adapt some of the individual weights as the domain
size changes, while leaving others unchanged.

Concerning the expressiveness of mixed RLRs, we can again turn to Propositions \ref{RLR-Expressivity} and \ref{RLR-Inexpressivity}.
However, since we adapt weights that appear as the coefficients of the polynomials $p$ mentioned there, we cannot express mixed terms containing both proportions and  raw numbers in decision thresholds.

\begin{proposition}\label{Mixed-RLR-Expressivity}
  Let $\mathcal{L}$ be the signature  $\{Q\} \cup \{R_i\}_{i \in I}$ with a nullary relation $Q$ and relations $R_i$ of positive arity. Then for any pair of polynomials $p(\vec{v})$ and $q(\vec{r})$ with terms $v_j$, each indicating a proportion of (tuples of) individuals for which a Boolean formula $\phi_j(\vec{x})$ of $\{R_i\}_{i \in I}$ is true or false, and $r_k$, each indicating a number of (tuples of) individuals for which a Boolean formula $\psi_j(\vec{x})$ of $\{R_i\}_{i \in I}$ is true or false, there is a mixed RLR $T$ such that for any $D$, the $T$-probability of $Q$ with respect to a domain $D$ is greater than 0.5 if and only if $p + q > 0$.   
Conversely, for any DA-RLR $T$ over this signature, there are such polynomials $p(\vec{v})$ and  $q(\vec{r})$, such that for any $D$, the $T$-probability of $Q$ with respect to a domain $D$ is greater than 0.5 if and only if $p + q> 0$.   
\end{proposition}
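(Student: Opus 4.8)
The plan is to reduce both implications to the single logistic regression sitting at the sink node, exactly as in the proofs of Propositions~\ref{RLR-Expressivity}, \ref{RLR-Inexpressivity} and \ref{DA-RLR-Expressivity}, of which this statement is essentially a common refinement. I would take the RBN in which all $R_i$ are roots and $Q$ is their common child; any DAG of this shape is admissible by Definition~\ref{def:RLRSyntax}, and the marginal distribution of the $R_i$ is irrelevant, since ``the $T$-probability of $Q$'' is the conditional probability of $Q$ given the interpretation of its parents. In a mixed RLR the label of $Q$ is a finite set of triples $(\psi_i,w_i,V_i)$, each flagged \emph{proportional} or \emph{absolute}; writing $A$ and $P$ for the two index sets, the semantics (Definition~\ref{def:RLRSemantics} for the absolute summands, Definition~\ref{def:DA-RLR-} for the proportional ones) gives, for every domain $D$ and every $\{R_i\}_{i\in I}$-structure $\mathfrak{X}$ on $D$,
\[
\mathcal{P}(Q)=\mathrm{sigmoid}\Big(\sum_{i\in A}w_i\left|\psi_i\right|_{\mathfrak{X}}+\sum_{i\in P}\frac{w_i}{|D|_{V_i}}\left|\psi_i\right|_{\mathfrak{X}}\Big).
\]
Since $\mathrm{sigmoid}$ is strictly increasing with $\mathrm{sigmoid}(0)=\frac{1}{2}$, we have $\mathcal{P}(Q)>\frac{1}{2}$ exactly when the argument above is positive, so the whole proposition reduces to identifying that argument with $p+q$.

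\textbf{Converse direction.} Given a mixed RLR $T$ over $\mathcal{L}$, each $\psi_i$ uses only relation symbols that are parents of $Q$, hence only the $R_i$, so each $\psi_i$ is a Boolean formula of $\{R_i\}_{i\in I}$. As $Q$ is nullary, $V_i$ is exactly the set of variables of $\psi_i$, so $|D|_{V_i}$ is the number of tuples matching their sorts and $\left|\psi_i\right|_{\mathfrak{X}}/|D|_{V_i}$ is the proportion of such tuples satisfying $\psi_i$. I would then set $q(\vec{r}):=\sum_{i\in A}w_i r_i$, with $r_i$ the count of tuples satisfying $\psi_i$, and $p(\vec{v}):=\sum_{i\in P}w_i v_i$, with $v_i$ the corresponding proportion; the argument of the sigmoid is then $p+q$ evaluated at $\mathfrak{X}$, giving the equivalence. (Taking $A=\emptyset$ yields in particular the statement read literally for a DA-RLR $T$.)

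\textbf{Expressivity direction.} Given $p(\vec{v})$ and $q(\vec{r})$, I would expand each into monomials and realise each monomial as a single feature of $Q$. For a monomial $c\cdot v_{j_1}\cdots v_{j_m}$ of $p$, introduce pairwise-disjoint fresh copies $\vec{x}^{(1)},\dots,\vec{x}^{(m)}$ (same sorts) of the variable tuples of $\phi_{j_1},\dots,\phi_{j_m}$ and form $\Phi:=\bigwedge_{\ell}\phi_{j_\ell}(\vec{x}^{(\ell)})$ with $V:=\bigcup_\ell\vec{x}^{(\ell)}$; since the conjuncts share no variables, $\left|\Phi\right|_{\mathfrak{X}}=\prod_\ell\left|\phi_{j_\ell}\right|_{\mathfrak{X}}$ and $|D|_V=\prod_\ell|D|_{\vec{x}^{(\ell)}}$, so $\left|\Phi\right|_{\mathfrak{X}}/|D|_V$ equals the product of proportions $v_{j_1}\cdots v_{j_m}$; add the proportional triple $(\Phi,c,V)$ to the label of $Q$. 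Analogously, a monomial $d\cdot r_{k_1}\cdots r_{k_t}$ of $q$ becomes an absolute triple $(\Psi,d,V')$ on fresh disjoint copies whose count over $\mathfrak{X}$ is the product $r_{k_1}\cdots r_{k_t}$. Constant terms are handled by a triple with $\psi=\top$, $V=\emptyset$ (both proportion and count equal $1$), and a Boolean formula ``being false'' is replaced by its negation, still a Boolean formula of $\{R_i\}_{i\in I}$. By the displayed formula the resulting mixed RLR satisfies $\mathcal{P}(Q)=\mathrm{sigmoid}(p+q)$ at every $\mathfrak{X}$, hence $\mathcal{P}(Q)>\frac{1}{2}$ iff $p+q>0$.

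\textbf{Main obstacle.} There is no deep difficulty once the logistic-sum reduction is in place; the only points requiring care are the bookkeeping of fresh disjoint variable blocks used to turn a monomial into a single conjunctive feature, together with the elementary fact that counts, respectively proportions, multiply over disjoint variable blocks. What this argument makes transparent --- and why the statement cannot be strengthened to monomials mixing a proportion with a raw number --- is that proportional and absolute features are \emph{added}, never multiplied, inside the one sigmoid at $Q$, so $p$ and $q$ can only combine additively; this is precisely the limitation flagged in the remark preceding the statement.
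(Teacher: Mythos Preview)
Your proposal is correct and follows exactly the route the paper implicitly takes: the paper does not give an explicit proof of this proposition but presents it as the direct analogue of Propositions~\ref{RLR-Expressivity}, \ref{RLR-Inexpressivity} and \ref{DA-RLR-Expressivity} (themselves quoted from Kazemi et al.\ (2014b)), noting only that since scaling acts on the coefficients one obtains a sum $p+q$ rather than a single polynomial in mixed counts and proportions. Your reduction to the sign of the sigmoid argument and the disjoint-variable-block construction for monomials is precisely the standard argument underlying those cited results, so there is nothing to add.
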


\subsection{\label{subsec:Random-Sampling} Random Sampling}




While random sampling from subpopulations has already been considered as a means for scalable learning in  Subsection \ref{subsec:Extrapolation} above, it can also occur in naturally variable domains.

By passing from RLRs to DA-RLRs, we move from considering absolute numbers of parent
atoms (which are heavily distorted by random sampling) to proportions,
which should be approximately conserved. For a concrete example, let
us consider the model of teaching and learning from Subsection \ref{subsec:Interpretation-of-scaling}.
Assume we had decided to consider absolute learning success, which
usually would not suggest scaling by domain size. However, we can
only observe a limited sample of lessons, and how many that is varies
from school to school. Now if we were to estimate learning success
here, it is natural to make it dependent on the \emph{proportion }and
not the \emph{number }of good lessons that we are seeing.

Random sampling is prevalent throughout the natural and social sciences,
however, and for instance the drug study example of Kazemi et al.
(2014b) also falls under this category.

\subsection{Projective Families of Distributions}

Jaeger and Schulte (2018, 2020) discussed the concept of projectivity and supplied a complete characterisation of projective families of distributions in terms of exchangeable arrays.
To formulate it, we consider an \emph{embedding} $\iota:D \rightarrow D'$ of (potentially multi-sorted) domains to be a set of injective maps between the matching sorts of $D$ an $D'$.
For every such embedding $\iota:D \rightarrow D'$ and every $\mathcal{L}$-structure $\mathfrak{X}'$ on the domain $D'$, we define $\mathfrak{X}'_{\iota}$ to be the $\mathcal{L}$-structure on the domain $D$ for which a relation $R$ holds for $\vec{x} \in D$ if and only if $R$ holds for $\vec{\iota(x)}$ in $\mathfrak{X}$. 

\begin{definition}
  Let $(\mathcal{P}_{T,D})_D$ be a family of probability distributions over the possible $\mathcal{L}$-structures on the domain $D$.
  $(\mathcal{P}_{T,D})_D$ is called \emph{projective} if for all injective maps $\iota:D \rightarrow D'$ and all $\mathcal{L}$-structures with domain $D'$ the following holds:
  \begin{align*}
    \mathcal{P}_{T,D}(\mathfrak{X}) = \mathcal{P}_{T,D'}(\{\mathfrak{X}' | \mathfrak{X}'_{\iota} = \mathfrak{X}\})
  \end{align*}
\end{definition}

The importance of projectivity stems from the fact that for a $\iota:D \rightarrow D'$ and any quantifier-free $\mathcal{L}$-formula $\varphi$, the $(\mathcal{P}_{T,D})$-probability that $\varphi(\vec{a})$ holds for a given $\vec{a} \in D$ is the same as the  $(\mathcal{P}_{T,D'})$-probability that $\varphi(\vec{\iota(a)})$ holds.
In particular, this means that the probabilities of a quantifier-free ground query that only mentions $m << n$ elements of a domain of size $n$ can actually be evaluated in the domain of size $m$ consisting only of those $m$ elements. Therefore, the complexity of such  queries is constant with respect to domain size.
As discussed by Jaeger and Schulte in (2018), under certain mild additional assumptions, projectivity also guarantees statistical consistency of learning parameter learning on randomly sampled subsets. This property relates projectivity directly to the task addressed by the present article.

While projective families of distributions have very attractive properties, their expressivity is limited. In particular, the fragments of common frameworks identified as projective by Jaeger and Schulte (2018) do not involve any interaction with a potentially unbounded number of other domain elements.
In the case of RLRs, this means in the notation of Definition \ref{def:RLRSyntax} that every variable occurring in a formula $\psi_i$ occurs in the corresponding $\varphi$. In fact, they are exactly the RLRs for which the variable set $V_i$ is empty and therefore the DA-RLR semantics coincides with the unscaled semantics of RLRs.
In this paper, we analyse the asymptotic behaviour of DA-RLRs and conclude from there that the asymptotic probabilities are well-defined and depend meaningfully on the input weights. However, unlike in  aprojective family of distributions, we do not claim that the asymptotic probabilities are actually equal to the probabilities on a given fixed domain.
Indeed, it is easy to see that general DA-RLRs are not projective.
\begin{proposition}
  There is a non-projective DA-RLR. 
\end{proposition}
\begin{proof}
  Consider the DA-RLR $T$ of Example \ref{ExampleRLR}, with a signature $\mathcal{L} := \{Q,R\}$, an RBN $R \longrightarrow Q$, weight $w_R = 0$ for $R$, $\varphi = Q(x)$, $\psi = R(y)$ and $w_Q = 1$.

  We consider a domain $D$ with a single element $a$, and the quantifier-free query formula $\chi := Q(a) \wedge R(a)$. Then $\mathcal{P}_{T,D}(\chi) = \mathcal{P}_{T,D}(R(a)) \cdot \mathcal{P}_{T,D}(Q(a)|R(a))$  by the Bayesian formula. By the definition of the DA-RLR semantics, we obtain  $\mathcal{P}_{T,D}(R(a)) = 0.5$ and $\mathcal{P}_{T,D}(Q(a)|R(a)) = \mathrm{sigmoid}(1) \approx 0.73$, and therefore  $\mathcal{P}_{T,D}(\chi) \approx 0.37$.

  Now consider a domain $D'$ with two elements $a$ and $b$. Then we can assert as before that $\mathcal{P}_{T,D'}(\chi) = \mathcal{P}_{T,D'}(R(a)) \cdot \mathcal{P}_{T,D'}(Q(a)|R(a))$ and that $\mathcal{P}_{T,D'}(R(a)) = 0.5$. However, there are now 2 possibilities for $R(b)$: With a 0.5 probability, $R(b)$ holds. For that case, $\mathcal{P}_{T,D'}(Q(a)|R(a),R(b)) = \mathrm{sigmoid}(1) \approx 0.73$.
  With a 0.5 probability, though, $R(b)$ does not hold. Then $\mathcal{P}_{T,D'}(Q(a)|R(a),\neg R(b)) = \mathrm{sigmoid}(0.5) \approx 0.62$ Overall, we can conclude that
  $\mathcal{P}_{T,D}(\chi) \approx 0.5 \cdot (0.5 \cdot 0.73 + 0.5 \cdot 0.62) < 0.37$. Therefore, $(\mathcal{P}_{T,D})_D$ is not projective.
\end{proof}

In other words, we unlock the much greater expressivity of DA-RLRs, which can model interactions involving potentially unbounded numbers of elements, by giving up the requirement that probabilities are constant on all domains in favour of the requirement that they are asymptotically convergent in a meaningful way.

\section{\label{sec:FutWork}Conclusion}

We have seen that DA-RLRs provide a framework for statistical relational AI that inherits the expressiveness of -- and the inference and learning algorithms available for -- RLRs but takes into account domain size when interpreting the weight parameters.
By providing an algorithm to compute the asymptotic probabilities in DA-RLRs, we unlock their use for
asypmptotically sound learning and inference.
We also show that the asymptotic probabilities obtained
in this way depend meaningfully on the weight parameters supplied.
This improves on the properties of DA-MLNs, which are shown to lack asymptotic dependence on the weight parameters in several paradigmatic cases.
Our results are optimal in the sense that DA-RLRs do not possess full projectivity.
Since the scaling of weights is such a transparent operation, we can also introduce mixed RLRs, which have both scaled and unscaled aggregators. This promises to enhance modelling capabilities for a large class of practical examples, whose behaviour across domain sizes does not conform to a polynomial increase in decision threshold. 

Beyond DA-RLRs themselves, our work could have wider implications on other directed formalisms such as probabilistic logic programming.
While probabilistic logic programming does not employ weights and it is therefore less obvious how to apply the scaling factor, our work demonstrates that directed approaches as such have a distinct advantage when tackling domain size extrapolation.
Since DA-RLRs are ultimately rooted in logistic regression, they could also utilise the advances and extensions of that field, such as multinomial regression for multivalued data. Any relational analogue of such a derived formalism will probably be able to utilise the results of this paper with little additional effort.

The methods developed in this paper for the analysis of MLNs and DA-MLNs also promises further applications to the study of undirected formalisms, which are still highly popular tools for statistical relational learning and knowledge representation. Such considerations could help researchers move beyond the heavy computational machinery in current use and towards a more transparent and rigorous probability-theoretic treatment.

\begin{acknowledgements}
The author would like to thank his students Michael Vogt and Omar AbdelWanis for comments
and suggestions on an earlier version of the manuscript. Thanks are also due to the anonymous
reviewers, which significantly improved the article. 
\end{acknowledgements}

\section*{Declarations}
Work on this article was not supported by any dedicated funding outside the regular means of the university. The author does not have any conflict of interest regarding this work.
Ethics approval, consent to participate, consent for publication and availability of data, material and code are not applicable to this work. This is the sole work of the single author, Felix Weitkämper.

\end{document}